\newtheorem{theorem}{Theorem}[section]
\newtheorem{Defn}[theorem]{Definition}
\newtheorem{lemma}[theorem]{Lemma}
\newtheorem{claim}[theorem]{Claim}
\newcommand{\calX}{\mathcal{X}}
\newcommand{\calY}{\mathcal{Y}}
\newcommand{\grad}{\nabla}
\def\calX{\mathcal{X}}
\def\calF{\mathcal{F}}
\def\sgn{\text{sgn}}
\newcommand{\thetastar}{\theta^*}
\newcommand{\thetatilde}{\widetilde{\theta}}
\newcommand{\thetatildezero}{\widetilde{\theta_0}}
\newcommand{\norm}[1]{\left\lVert#1\right\rVert}
\def\calG{\mathcal{G}}
\def\vecu{\mathbf{u}}
\newcommand{\SimplisticAttack}{\textsc{SimplisticAttack}}
\newcommand{\easy}{easy}
\newcommand{\hard}{hard}
\newcommand{\intermediate}{intermediate}
\newcommand{\mypara}[1]{{\bf #1}}
\newcommand{\Straight}{Simplistic\xspace}
\title{An Investigation of Data Poisoning Defenses\\
for Online Learning}
\author{
  Yizhen Wang \\
  University of California, San Diego \\
  \texttt{yiw248@eng.ucsd.edu}
  \And
  Somesh Jha \\
  University of Wisconsin, Madison \\
  \texttt{jha@cs.wisc.edu}
  \And
  Kamalika Chaudhuri\\
  University of California, San Diego \\
  \texttt{kamalika@cs.ucsd.edu} 
}
\begin{document}
\maketitle

\begin{abstract}
Data poisoning attacks -- where an adversary can modify a small fraction of training data, with the goal of forcing the trained classifier to high loss -- are an important threat for machine learning in many applications. While a body of prior work has developed attacks and defenses, there is not much general understanding on when various attacks and defenses are effective. 

In this work, we undertake a rigorous study of defenses against data poisoning for online learning. First, we study four standard defenses in a powerful threat model, and provide conditions under which they can allow or resist rapid poisoning. We then consider a weaker and more realistic threat model, and show that the success of the adversary in the presence of data poisoning defenses there depends on the ``ease'' of the learning problem. 
\end{abstract}

\section{Introduction}
Data poisoning attacks are an important threat for machine learning in applications where adversaries are incentivized to influence the behavior of a classifier -- for example, spam filtering~\citep{nelson2008exploiting}, malware detection~\citep{newsome2006paragraph}, sentiment analysis \citep{newell2014practicality} and collaborative filtering \citep{li2016data}. A data poisoning adversary can add or modify a small fraction of training examples -- typically with the goal of forcing the classifier trained on the resulting data to low accuracy. For example, a spammer may introduce carefully crafted spam in order to ensure that a classifier trained on the collected data is inaccurate.  

A body of prior work has developed many poisoning attacks and defenses~\citep{biggio2012poisoning, meizhu, munoz2017towards, shafahi2018, koh2018stronger, steinhardt2017certified, li2016data, paudice2018detection}. However, the literature largely suffers from two main limitations. First, most work assumes that all data is provided in advance and the learner's goal is to produce a loss minimizer on this data. This excludes many applications such as malware detection where classifiers are incrementally updated with new data as well as algorithms such as stochastic gradient descent. The second drawback is that there is little analysis, and not much is rigorously understood about exactly how powerful the attacks are, and the conditions under which existing defenses are successful. 

In this work, we consider data poisoning for a different setting -- online learning, and carry out a theoretical and empirical study of its effectiveness under four popular defenses. In online learning, examples arrive sequentially, and at iteration $t$, the learner updates its parameters based on the newly-arrived example. We select as our learner perhaps the most basic yet widely-used algorithm -- online gradient descent (OGD). Starting with an initial parameter value and a loss function, OGD updates its parameter value by taking a small step against the gradient of the loss at the current example. 

We consider two threat models. First, for its amenability to analysis, we consider a powerful semi-online threat model by~\cite{wang2018data}, where the adversary can add examples in any position in the stream, and their goal is to attack the classifier produced at the end of learning. In addition, we consider a messier but more realisitic fully-online threat model -- where the adversary can only add examples at pre-specified positions and their goal is to achieve high overall loss throughout the entire stream. We assume that the learner has a defense in place that filters out input examples with certain properties determined by the defense mechanism.

For the semi-online threat model, we assume that the adversary has a target classifier in mind, and we measure how rapidly it can cause the learnt classifier to move towards the target when a specific defense is in place. We look at three regimes of effectiveness.  In the first \easy\ regime, there is a simplistic attack that the adversary can use to rapidly reach its target. In the second \hard\ regime, there are no successful data poisoning attacks that succeed against a defense. In between lies an \intermediate\ regime where effective attacks exist, but are not as simplistic or powerful. Specifically, our contributions are:

\begin{itemize}[leftmargin=*, topsep=5pt]
\item We prove that the $L_2$-norm defense, which filters out all examples outside an $L_2$-ball is always in the \easy\ regime and hence mostly weak. 
\item We provide conditions under which the labeling oracle defense~\citep{shafahi2018, suciu2018}, where the adversary {\em{only}} provides an unlabeled example to be labeled by an annotator, is in the \easy\ regime or in the \hard\ regime. 
\item We characterize the performance of two data-dependent defenses  -- the $L_2$-distance-to-centroid defense and the Slab defense~\citep{steinhardt2017certified,koh2018stronger} -- when initialized on clean data. 

\item We empirically investigate the extent to which our theoretical observations translate to practice, and show that existing attacks are mostly effective in the regime that our theory predicts as easy, and mostly ineffective in those predicted as hard. 
\end{itemize}

While our theory directly measures the speed of poisoning, a remaining question is how the poisoning attacks and defenses interact with the learning process overall -- as defenses also filter out clean examples.  To understand this, we consider the interplay of poisoning and learning in a more realistic threat model -- the fully online model -- where the adversary can only add examples in pre-specified positions, and its goal is to force high overall loss over the entire stream.  Our findings here are as follows:

\begin{itemize}[leftmargin=*]
\item Theoretically, we show two examples that illustrate that the adversary's success depends on the ``easiness'' of the learning problem -- for low dimensional data with well-separated classes, defenses work well, while the adversary can succeed more easily for higher dimensional, lower margin data.

\item Experimentally, we corroborate that data poisoning defenses are highly effective when the classification problem is easy, but the findings from the semi-online case carry over when the classification problem is more challenging. 

\end{itemize}

Our results have two major implications for data poisoning in the online setting. First, they indicate that the Slab defense may be a highly effective defense overall. Second, our experiments indicate that for challenging classification problems, weaker threat models can still result in fairly powerful attacks, thus implying that data poisoning is a threat even for these weaker adversaries.

\paragraph{Related Work.} The work closest in spirit to ours is by \cite{koh2018stronger}, which proposes and analyzes strong data poisoning attacks that break a number of data sanitization based defenses. Their work however differs from ours in two major ways. First, they are in the offline setting where all data is provided in advance and the learner is an offline empirical risk minimizer; second, their analysis focuses on the number of distinct poisoning examples needed while ours characterizes when defenses are effective against attacks. \cite{mahloujifar2018curse, mahloujifar2017learning} provides upper bounds on the number of poisoning points required to poison online and offline learners; however, they do not consider specific defenses, and their results only hold for certain classes of distributions. \cite{diakonikolas2018sever} provides a convex optimization algorithm that is resistant to data poisoning attacks and has provable guarantees; however, their algorithm is also in the offline setting and requires advance knowledge of the data. 

Data poisoning is a threat to a number of applications~\citep{nelson2008exploiting, newsome2006paragraph, newell2014practicality, li2016data}, and hence there is extensive prior work on it -- mostly in the offline setting. Efficient attacks have been developed against logistic regression, SVM and neural networks~\citep{biggio2012poisoning, meizhu, munoz2017towards, shafahi2018, koh2018stronger}, and defenses have been analyzed by~\cite{steinhardt2017certified} and~\cite{li2016data}.

Recent work has looked at data poisoning in the online setting against classification~\citep{wang2018data, lessard2018optimal, zhu2019}, contextual bandits~\citep{ma2018data} and autoregressive models~\citep{alfeld2016data}. Prior works on online attacks include~\cite{wang2018data}, which finds an efficient gradient-ascent attack method for a number online objectives, and~\cite{lessard2018optimal}, which views the attack process as an optimal control problem to be solved by a nonlinear optimization solver or be approximated using reinforcement learning~\citep{zhu2019}. However, little is understood on the defense side, and our work is the first to provide theoretical performance guarantees of attacks and defenses for online data poisoning. 
Finally, there has also been prior work on outlier detection by ~\cite{diakonikolas2016efficient,diakonikolas2017being,diakonikolas2018robustly}; unlike ours, these are largely in the unsupervised learning setting.

\section{The Setting}
There are two parties in a data poisoning process -- the learner and the attacker.
We describe the learning algorithm as well as the attacker's capability and goals of two threat models -- semi-online and fully-online.
\subsection{Learning Algorithm}
We consider online learning -- examples $(x_t, y_t)$ arrive sequentially, and the learner uses $(x_t, y_t)$ to update its current model $\theta_t$, and releases the final model $\theta_{T}$ at the conclusion of learning. Our learner of choice is the popular online gradient descent (OGD) algorithm~\cite{shalev2012online, hazan2016introduction}. The OGD algorithm, parameterized by a learning rate $\eta$, takes as input an initial model $\theta_0$, and a sequence of examples $S = \{ (x_0, y_0), \ldots, (x_{T-1}, y_{T-1}) \}$; at iteration $t$, it performs the update:
\[ \theta_{t+1} = \theta_t - \eta \grad \ell(\theta_t, x_t, y_t) \]
where $\ell$ is an underlying loss function -- such as square loss or logistic loss. 
In this work, we focus on logistic regression classifiers, which uses logistic loss.

\subsection{Threat Models}
We consider two types of attackers -- {\bf semi-online} and {\bf fully-online} -- which differ in their poisoning power and objectives. 

\mypara{Semi-online.} 
A semi-online attacker adds poisoning examples to the clean data stream $S$,
and the resulting poisoned stream $S'$ becomes the input to the online learner.
We consider a strong attacker that knows the entire clean data stream and can add up to $K$ examples to $S$ at \emph{any} position.  
Let $\theta$ be the final model output by the learner. 
A semi-online attacker's goal is to obtain a $\theta$ that satisfies certain objectives, for example $\theta$ could be equal to a specific $\theta^*$, or have high error on the test data distribution. 
The attacker is `semi-online' because the learner is online but 1) the attacker's knowledge of the stream resembles an offline learner's, and 2) the objective only involves the final model $\theta$.

\mypara{Fully-online.} 
Unlike a semi-online attacker which knows the entire data stream and can poison at any position, a fully-online attacker only knows the data stream up to the current time step and can only add poisoning examples at pre-specified positions.
This is a more realistic setting in which the learner gets data from poisoned sources at specific time steps. 
Let $S'$ be the resulting poisoned stream of length $T'$ and $I = \{t_1, \cdots, t_K\}$ be the set of time steps with poisoned examples. The fully-online attacker aims to increase the online model's loss over \emph{clean} examples in the data stream over the entire time horizon, i.e. to maximize
$\sum_{t=0}^{T'-1} \ell(\theta_t,x_t,y_t) \mathds{1}[t \notin I]$.

Note that the attacks are white-box in both settings, which is a common assumption in data poisoning. The only work in a general black-box setting is by \cite{dasgupta2019teaching}, whose methods are computationally inefficient. 
Other work either considers simple and limited attacker model~\citep{zhao2017efficient} or knows the model class and learning algorithm~\citep{liu2017towards}. 
The white-box setting also rules out security by obfuscation.

\mypara{Defense.}
In both semi-online and fully-online settings,
we assume that the attacker works against a learner equipped with a defense mechanism characterized by a feasible set $\calF$. For example, $\calF$ is an $L_2$-ball of radius $R$ in the popular $L_2$-norm defense.  
If the incoming example $(x_t, y_t) \in \calF$, then it is used for updating $\theta_t$; otherwise it is filtered out.
The attacker knows $\calF$ under the white-box assumption. 

\section{Analysis}
\label{sec:analysis}

In this section, we analyze the effectiveness of common defenses against data poisoning in the semi-online setting. We formalize a semi-online attacker as follows. 
\begin{Defn}[Data Poisoner]
A Data Poisoner $DP^{\eta}(\theta_0, S, K, \calF, \theta^*, \epsilon)$, parameterized by a learning rate $\eta$, takes as input an initializer $\theta_0$, a sequence of examples $S$, an integer $K$, a feasible set $\calF \subseteq \calX\times\calY$, a target model $\theta^*$, and a tolerance parameter $\epsilon$ and outputs a sequence of examples $\tilde{S}$. The attack is said to succeed if $\tilde{S}$ has three properties --  first, an $OGD$ that uses the learning rate $\eta$, initializer $\theta_0$ and input stream $\tilde{S}$ will output a model $\theta$ s.t. $\norm{\thetastar-\theta}\leq\epsilon$; second, $\tilde{S}$ is obtained by inserting at most $K$ examples to $S$, and third, the inserted examples lie in the feasible set $\calF$. 
\end{Defn}
For simplicity of presentation, we say that a data poisoner outputs a model $\theta$ if the OGD algorithm obtains a model $\theta$ over the data stream $\tilde{S}$.

In order to test the strength of a defense, we propose a simple data poisoning algorithm -- the \SimplisticAttack. A defense is weak if this attack succeeds for small $K$, and hence is able to achieve the poisoning target rapidly. In contrast, a defense is strong if no attack with any $K$ can achieve the objective. We analyze the conditions under which these two regimes hold for an online learner using logistic loss for binary classification tasks. 

\begin{algorithm}
\caption{\SimplisticAttack($\theta_0$, $S$, $K$, $\calF$, $\thetastar$, $\epsilon$, $R$)}
\begin{algorithmic}
\STATE {\bfseries Input:} initial model $\theta_0$, clean data stream $S$, max number of poisoning points $K$, feasible set $\calF$, target model $\thetastar$, tolerance parameter $\epsilon$, max $L_2$ norm $R$ of inputs.
\STATE{Initialize $\thetatildezero$ as the model learned over $S$.}
\STATE{$\lambda = \norm{\thetastar}, \tilde{S} = S, t=0$}
\STATE{$\gamma_0 = \min\left(R/\|\thetatilde_0 - \thetastar\|,1/\eta\right)$}
\STATE{{\bf while} \mbox{$t < K$ and $\|\thetatilde_t - \thetastar\| \geq \epsilon$} {\bf do}}
\STATE{\quad \mbox{$\gamma_t = \min\left(\gamma^*_t, \gamma_0\right)$, where $\gamma^*_t$ is the solution of $\gamma$ to} \mbox{ $\frac{\gamma}{1+\exp(\thetatilde^{\top}_t(\thetastar-\thetatilde_t)\gamma)} = \frac{1}{\eta}$ }}
\STATE{\quad $(x_t, y_t) = (\gamma_t(\thetastar-\thetatilde_t), +1)$}
\STATE{\quad Find the closest $c\in[0,R/\|x_t\|]$ to $1$, s.t. $(cx_t, y_t)\in\calF$ or $(-cx_t, -y_t)\in\calF$.}
\STATE{\quad {\bf if} $(cx_t, y_t)\in\calF$ {\bf then} \mbox{append ($cx_t, y_t$) to $\tilde{S}$}}
\STATE{\quad {\bf else} \mbox{append ($-cx_t, -y_t$) to $\tilde{S}$}}
\STATE{\quad $t=t+1$}
\STATE{{\bf endwhile}}
\STATE{{\bf return} $\tilde{S}$}
\end{algorithmic}
\label{alg:simplisticattack}
\end{algorithm}

The \SimplisticAttack\ algorithm is described in Algorithm~\ref{alg:simplisticattack}. Suppose the online learner has an initial model $\theta_0$ and obtains a model $\thetatildezero$ after learning over the clean stream $S$. 
\SimplisticAttack\ iteratively appends poisoning points to the clean stream $S$, where all poisoning points are in the direction of $\thetastar-\thetatildezero$ with adaptive $L_2$ magnitude capped at $R$. 
The algorithm is inspired by the iterative teaching method~\citep{liu2017iterative} and is modified with an additional projection step for general feasible set $\calF$.  
In order to maintain the direction of the poisoning points, the projection step scales the input by a nonnegative factor $c$.% found via line search. 

\subsection{Bounded $L_2$ Norm Defense.}
In many applications, the learner requires the input vector to be within a bounded domain, for example having bounded $L_p$ norm. We consider inputs with bounded $L_2$ norm, which corresponds to a defense mechanism with a feasible set $\calF = \{(x,y)|\norm{x}_2 \leq R\}$ for some constant $R$. Notice that if the input domain is bounded by $L_\infty$ or $L_1$ norms, we can still find an $L_2$ ball inscribed in the domain. We next show that \SimplisticAttack\ is a very efficient data poisoner for this setting. 

\begin{theorem} 
\label{thm:boundedpoisoner}
Let $\theta_0$ and $\thetastar$ be the initial and the target model.
Suppose the feasible set $\calF = \{(x,y)|\norm{x}_2\leq R\}$ and $\norm{\thetastar} = \lambda$.
If $K > C \log (\lambda/\epsilon)$ for some constant $C=C(R)$, then \SimplisticAttack$\left(\theta_0, S, K, \calF, \thetastar, \epsilon, R\right)$ outputs a $\theta$ such that $\norm{\theta - \thetastar} \leq \epsilon$. 
\end{theorem}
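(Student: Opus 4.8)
Since \SimplisticAttack\ only appends points to the end of $S$, after OGD processes $S$ it is in state $\thetatildezero$ and thereafter runs on the appended synthetic points, so it suffices to analyze this synthetic trajectory. I would first check that the projection step is vacuous: for $\calF=\{(x,y):\norm{x}\le R\}$ one has $(cx,y)\in\calF\iff c\le R/\norm{x}$, so the scale closest to $1$ is $c=1$ as long as $\norm{x_t}\le R$. The point produced at step $t$ is $x_t=\gamma_t(\thetastar-\thetatilde_t)$ with $\gamma_t\le\gamma_0\le R/\norm{\thetatildezero-\thetastar}$, and maintaining inductively that $\norm{\thetatilde_t-\thetastar}\le\norm{\thetatildezero-\thetastar}$ gives $\norm{x_t}=\gamma_t\norm{\thetatilde_t-\thetastar}\le R$; hence $c=1$ and the attack just feeds $(x_t,+1)$ to OGD. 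One more simplification: any root of $\gamma/(1+\exp(\thetatilde^{\top}_t(\thetastar-\thetatilde_t)\gamma))=1/\eta$ obeys $\eta\gamma=1+\exp(\cdot)>1$, i.e.\ $\gamma^*_t>1/\eta\ge\gamma_0$ (reading $\gamma^*_t=+\infty$ when no root exists), so in fact $\gamma_t=\gamma_0$ at every step.

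The heart of the argument is the one-step recurrence. With logistic loss, $\grad\ell(\theta,x,y)=-yx/(1+\exp(y\theta^{\top}x))$, so the OGD update on the synthetic point $(x_t,+1)=(\gamma_0(\thetastar-\thetatilde_t),+1)$ --- which is parallel to $\thetastar-\thetatilde_t$ --- gives
\[ \thetatilde_{t+1}-\thetastar=(1-\alpha_t)(\thetatilde_t-\thetastar),\qquad \alpha_t=\frac{\eta\gamma_0}{1+\exp\!\big(\gamma_0\,\thetatilde^{\top}_t(\thetastar-\thetatilde_t)\big)}. \]
Thus $\norm{\thetatilde_{t+1}-\thetastar}=(1-\alpha_t)\norm{\thetatilde_t-\thetastar}$ and $\thetatilde_{t+1}=(1-\alpha_t)\thetatilde_t+\alpha_t\thetastar$, so the iterates never leave the segment $[\thetatildezero,\thetastar]$ (this is precisely the inductive invariant used above), and $\thetatilde_t-\thetastar=\beta_t(\thetatildezero-\thetastar)$ with $\beta_t=\prod_{s<t}(1-\alpha_s)\in(0,1]$. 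The whole proof now reduces to showing $0<\alpha_t\le 1$ with $\alpha_t$ bounded away from $0$ by a constant.

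The upper bound $\alpha_t<1$ is immediate since $\eta\gamma_0=\min(\eta R/\norm{\thetatildezero-\thetastar},1)\le 1$ and the sigmoidal factor is $<1$; in particular convergence is monotone, with no overshoot. For the lower bound --- the step I expect to be the main obstacle --- I need a uniform upper bound on the exponent $E_t:=\gamma_0\,\thetatilde^{\top}_t(\thetastar-\thetatilde_t)$. Writing $\thetatilde_t=(1-\beta_t)\thetastar+\beta_t\thetatildezero$ and $\thetastar-\thetatilde_t=\beta_t(\thetastar-\thetatildezero)$, Cauchy--Schwarz gives $\thetatilde^{\top}_t(\thetastar-\thetatilde_t)\le\max(\lambda,\norm{\thetatildezero})\,\beta_t\norm{\thetastar-\thetatildezero}$, and since $\gamma_0\norm{\thetastar-\thetatildezero}\le R$ and $\beta_t\le1$ this yields $E_t\le R\max(\lambda,\norm{\thetatildezero})$. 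Hence $\alpha_t\ge c$ for the positive constant $c:=\eta\gamma_0/(1+e^{R\max(\lambda,\norm{\thetatildezero})})$; the key subtlety is that this uses \emph{both} caps in $\gamma_0=\min(R/\norm{\thetatildezero-\thetastar},1/\eta)$ --- the first to cancel the extra factor $\norm{\thetastar-\thetatildezero}$ appearing in $E_t$, the second to force $\alpha_t\le1$ --- together with the fact that the trajectory stays on $[\thetatildezero,\thetastar]$.

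Finally, unrolling the recurrence, $\norm{\thetatilde_K-\thetastar}\le(1-c)^{K}\norm{\thetatildezero-\thetastar}\le(1-c)^{K}(\norm{\thetatildezero}+\lambda)$, which is at most $\epsilon$ once $K\ge\log\!\big((\norm{\thetatildezero}+\lambda)/\epsilon\big)/\log(1/(1-c))$; absorbing the fixed quantities $\eta,\theta_0,\thetastar$ and the lower-order logarithmic terms into a constant, this is exactly the claimed bound $K>C\log(\lambda/\epsilon)$ with $C=C(R)$. Since the while-loop of \SimplisticAttack\ runs until $t=K$ or $\norm{\thetatilde_t-\thetastar}<\epsilon$, for any such $K$ the returned stream $\tilde S$ drives OGD to a model $\theta$ with $\norm{\theta-\thetastar}\le\epsilon$, and $\tilde S$ inserts $\le K$ points of $\calF$ into $S$, completing the proof.
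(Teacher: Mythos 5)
Your proposal is correct and follows essentially the same route as the paper: append points $\gamma_0(\thetastar-\thetatilde_t)$ with label $+1$, obtain the contraction $\thetatilde_{t+1}-\thetastar=(1-\alpha_t)(\thetatilde_t-\thetastar)$, and bound $\alpha_t$ away from $0$ uniformly in $t$ to conclude a logarithmic number of poisoning points. The only minor differences are that you bound the exponent via Cauchy--Schwarz along the segment (getting $R\max(\lambda,\norm{\thetatildezero})$) where the paper uses the pointwise bound $\thetatilde_t^{\top}\thetastar-\norm{\thetatilde_t}^2\le\lambda^2$ (valid for any $\thetatilde_t$, no trajectory invariant needed), and that you observe $\gamma^*_t>1/\eta\ge\gamma_0$, so the paper's exact-jump case $\gamma_t=\gamma^*_t$ is in fact vacuous.
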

\mypara{Remark.} Notice that $K$ is an upperbound of number of poisoning examples needed to output a $\theta$ close to $\thetastar$, therefore also corresponds to a lowerbound on the speed of poisoning. Theorem~\ref{thm:boundedpoisoner} suggests that \SimplisticAttack\ outputs a $\theta$ close to $\thetastar$ within logarithmic number of steps w.r.t. $1/\epsilon$. Therefore, the defense is in the \easy\ regime. The factor $C$ increases with decreasing $R$, and is in the order of $O(1/\eta R)$ when $R$ is small. However, in real applications, the learner cannot set $R$ to be arbitrarily small because this also rejects clean points in the stream and slows down learning. We investigate this in more details in our evaluation in Sec~\ref{sec:exp}.  

\subsection{The Labeling Oracle Defense}
In many applications, the adversary can add unlabeled examples to a dataset, which are subsequently labeled by a human or an automated method. We call the induced labeling function a labeling oracle. Denoting the labeling oracle by $g$, we observe that this oracle, along with bounded $L_2$ norm constraint on the examples, induces the following feasible set:
\begin{equation} \label{eqn:feasible}
\calF = \{ (x, y) | \|x\| \leq R, y = g(x) \} ,
\end{equation}
For simplicity, we analyse the oracle defense on a different feasible set $\calF'=\{(yx, +1)|(x,y)\in \calF\}$ derived from $\calF$. We call $\calF'$ the \emph{one-sided} form of $\calF$ because it flips all $(x,-1)\in\calF$ into $(-x,+1)$ and as a result only contains points with label $+1$. Lemma~\ref{lem:equivalentf} in the appendix shows that an attacker that outputs $\theta$ using $K$ points in $\calF$ always corresponds to some attacker that outputs $\theta$ using $K$ points in $\calF'$ and vice versa. Therefore, the defenses characterized by $\calF$ and $\calF'$ have the same behavior.

We next show that for feasible sets $\calF'$ of this nature, three things can happen as illustrated in Figure~\ref{fig:general}. First, if $\calF'$ contains a line segment connecting the origin $O$ and some point in the direction of $\theta^* - \thetatildezero$, then \SimplisticAttack\ can output a $\theta$ close to $\theta^*$ rapidly. Second, if $\calF'$ is within a convex region $\calG$ that does not contain any point in the direction of $\theta^*-\theta_0$, then no poisoner can output $\thetastar$. Third, if $\calF'$ contains points in the direction of $\theta^* - \thetatildezero$ but not the origin, then the attack can vary from impossible to rapid. Theorem~\ref{thm:generalconstraints} captures the first and the second scenarios, and Appendix~\ref{sec:intermediatecases} shows various cases under the third.

\begin{figure}
\centering
\includegraphics[width=0.5\textwidth]{./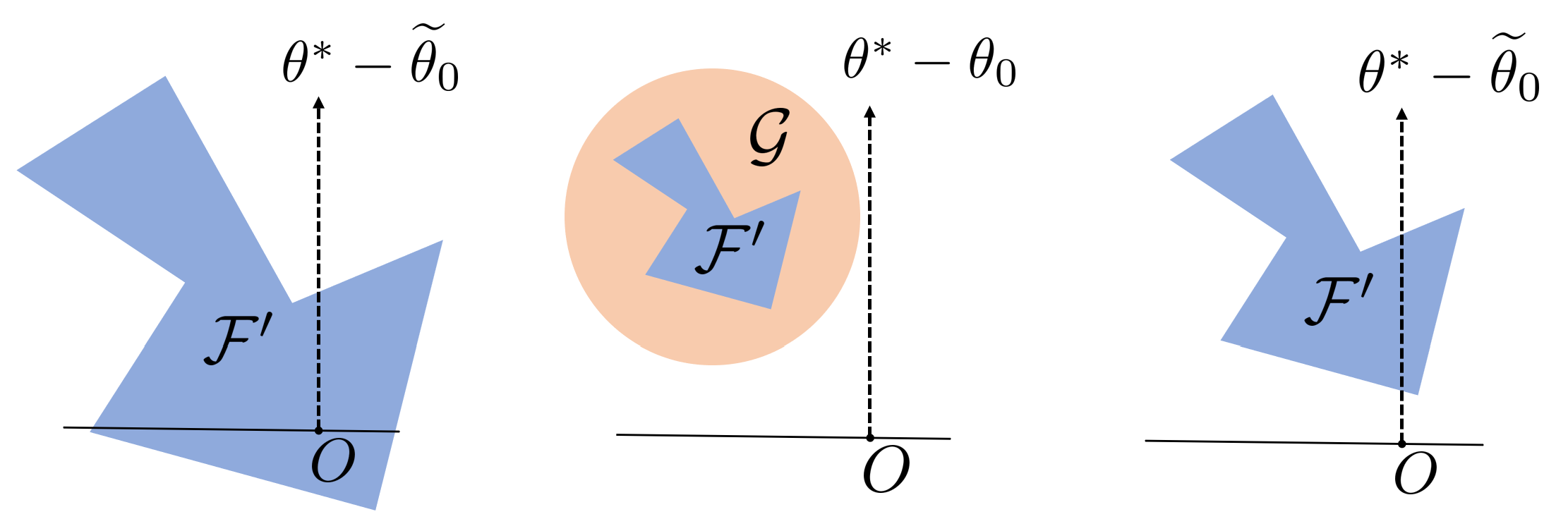}
\caption{
Schematic illustration of the one-sided feasible sets $\calF'$ corresponding to three defense regimes.
Left to right: defense is in the \easy, \hard\ and \intermediate\ regime.
} 
\label{fig:general}
\end{figure}

\begin{theorem}
Let $L(r, \vecu) = \left\{c\vecu/\norm{\vecu}\Big|0 < c \leq r \right\}$ denote a line segment connecting the origin and $r\vecu/\|\vecu\|$ for some vector $\vecu$. Also, let $\theta_0, \thetastar$ be as defined in Theorem~\ref{thm:boundedpoisoner} and $\thetatildezero$ be the online learner's model over the clean stream $S$. Suppose $\norm{\thetastar} = \lambda$.
\begin{enumerate}
\item
If $L(r, \theta^* - \thetatildezero) \subseteq \calF'$ for some $r > 0$ and $K \geq C \log (\lambda/\epsilon)$ for some constant $C$, then \SimplisticAttack$\left(\theta_0, S, K, \calF', \thetastar, \epsilon, r\right)$ outputs a $\theta$ with $\norm{\theta - \theta^*} \leq \epsilon$.
\item
If there exists a convex set $\calG$ such that 1) $\calF'\subseteq\calG$, and 2) $\calG \bigcap L(+\infty,\theta^*-\theta_0) = \emptyset$, then no data poisoner can output $\thetastar$ for any $K$.
\end{enumerate}
\label{thm:generalconstraints}
\end{theorem}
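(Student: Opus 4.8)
I would prove the two parts by separate arguments. Part 1 reduces to Theorem~\ref{thm:boundedpoisoner}: the key point is that once $L(r,\thetastar-\thetatildezero)\subseteq\calF'$, \SimplisticAttack\ run with radius $r$ never actually activates its projection step. First I would record the invariant behind iterative teaching under logistic loss: writing the OGD update on the one-sided point $(x_t,+1)$ with $x_t=\gamma_t(\thetastar-\thetatilde_t)$ gives $\thetastar-\thetatilde_{t+1}=\bigl(1-\tfrac{\eta\gamma_t}{1+\exp(\thetatilde_t^\top(\thetastar-\thetatilde_t)\gamma_t)}\bigr)(\thetastar-\thetatilde_t)$, and the rule $\gamma_t=\min(\gamma_t^*,\gamma_0)$ keeps this scalar factor in $[0,1)$ (it is exactly $0$ on an uncapped step, which would end the loop). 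Hence by induction $\thetastar-\thetatilde_t$ is a positive multiple of $\thetastar-\thetatildezero$ while the loop runs, and $\|\thetatilde_t-\thetastar\|\le\|\thetatildezero-\thetastar\|$, so $\|x_t\|=\gamma_t\|\thetastar-\thetatilde_t\|\le\gamma_0\|\thetastar-\thetatildezero\|\le r$; thus $x_t\in L(r,\thetastar-\thetatildezero)\subseteq\calF'$, the algorithm appends $(x_t,+1)$ with scale $c=1$, and the OGD trajectory is identical to the one analyzed in Theorem~\ref{thm:boundedpoisoner} with $R$ replaced by $r$. The contraction estimate there --- capped steps shrink $\|\thetatilde_t-\thetastar\|$ by a factor $\bar\rho<1$ depending only on $\eta$ and $r$ --- then yields $\|\thetatilde_K-\thetastar\|\le\bar\rho^{\,K}\|\thetatildezero-\thetastar\|\le\epsilon$ whenever $K\ge C\log(\lambda/\epsilon)$.

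For Part 2, I would argue directly about the set of models $\theta_T$ that OGD can reach. Passing to the one-sided picture of Lemma~\ref{lem:equivalentf}, every surviving update --- whether on a clean example that passes the filter or on an inserted poisoning example --- has the form $\theta_{t+1}=\theta_t+\tfrac{\eta}{1+\exp(\theta_t^\top v_t)}v_t$ for some $v_t\in\calF'\subseteq\calG$, with a strictly positive coefficient. Summing over all steps, $\theta_T-\theta_0=\sum_t\alpha_t v_t$ with $\alpha_t>0$ and $v_t\in\calG$. Now suppose for contradiction that some poisoner outputs $\thetastar$, i.e.\ $\theta_T=\thetastar$; since $\thetastar\ne\theta_0$ (otherwise $L(+\infty,\thetastar-\theta_0)$ is empty and the claim is vacuous), the sum is nonempty, and dividing by $A:=\sum_t\alpha_t>0$ writes $\tfrac1A(\thetastar-\theta_0)$ as a convex combination of the $v_t$, hence as a point of $\calG$ by convexity. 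But $\tfrac1A(\thetastar-\theta_0)=\tfrac{\|\thetastar-\theta_0\|}{A}\cdot\tfrac{\thetastar-\theta_0}{\|\thetastar-\theta_0\|}\in L(+\infty,\thetastar-\theta_0)$, contradicting $\calG\cap L(+\infty,\thetastar-\theta_0)=\emptyset$.

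The genuinely new content is light: Part 2 is essentially the convexity argument above, and Part 1 is a reduction. The step I expect to need the most care is the invariant in Part 1 --- verifying that the iterates of \SimplisticAttack\ stay on the ray through $\thetastar-\thetatildezero$ and inside the $r$-ball, so that the projection step is inert and the earlier theorem applies verbatim (with its constant $C=C(r)$). I also need to invoke Lemma~\ref{lem:equivalentf} carefully so that ``no data poisoner for $\calF$'' and ``no data poisoner for $\calF'$'' are genuinely interchangeable in Part 2.
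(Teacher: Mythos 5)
Your proposal is correct and follows essentially the same route as the paper: Part 1 is the same reduction to Theorem~\ref{thm:boundedpoisoner} via the induction that $\thetatilde_t-\thetastar$ stays a nonnegative multiple of $\thetatildezero-\thetastar$ (so the projection step is inert), and Part 2 is the same ``positive combination of feasible points versus convexity'' argument. The only difference is cosmetic: where you normalize $\sum_t\alpha_t v_t$ by $\sum_t\alpha_t$ to land directly in $\calG$, the paper packages the same fact as Lemma~\ref{lem:impossiblesum} and proves it by induction on the number of points; your one-step normalization is a slightly cleaner way to get the identical conclusion.
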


\subsection{Data-driven Defenses}
A common data poisoning defense strategy is to find a subset of the training points that are anomalous or ``outliers", and then sanitize the data set by filtering them out before training. 
Following~\cite{cretu2008},~\cite{steinhardt2017certified},~\cite{paudice2018} and~\cite{koh2018stronger}, we assume a defender who builds the filtering rule using a clean initialization data set that the poisoner cannot corrupt. Thus the poisoner knows the defense mechanism and parameters, but has no way of altering the defense. We focus on two defenses, the $L_2$-distance-to-centroid defense and the Slab defense~\citep{steinhardt2017certified,koh2018stronger}, which differ in their outlier detection rules.

\mypara{Defense Description.}
Both defenses assign a score to an input example, and discard it if its score is above some threshold $\tau$. The $L_2$ norm constraint on the input vector still applies, i.e. $\norm{x} \leq R$. 
Let $\mu_+$ denote the centroid of the positive class and $\mu_-$ the centroid of the negative class computed from the initialization set.

$L_2$-distance-to-centroid defense assigns to a point $(x,y)$ a score $\norm{x-\mu_y}$. This induces a feasible set $\calF^c_+ = \{(x,+1)|\norm{x-\mu_+}\leq \tau, \norm{x}\leq R\}$ for points with label $+1$ and $\calF^c_- = \{(x,-1)|\norm{x-\mu_-}\leq \tau, \norm{x}\leq R\}$ for points with label $-1$. The entire feasible set $\calF^c = \calF^c_+ \bigcup \calF^c_-$.

Slab defense assigns to a point $(x,y)$ a score $|(\mu_+-\mu_-)^{\top}(x-\mu_y)|$. We call $\beta = \mu_+ - \mu_-$ the ``defense direction". Intuitively, the defense limits the distance between the input and the class centroid in the defense direction. The defense induces a feasible set $\calF^s_+ = \{(x,+1)||\beta^{\top}(x-\mu_+)| \leq \tau, \norm{x}\leq R\}$ for points with label $+1$ and $\calF^s_- = \{(x,-1)||\beta^{\top}(x-\mu_-)| \leq \tau, \norm{x}\leq R\}$ for points with label $-1$. The entire feasible set $\calF^s = \calF^s_+ \bigcup \calF^s_-$.

\mypara{$L_2$-distance-to-centroid Defense.}
If $\tau$ is large such that either $\calF^c_+$ or $\calF^c_-$ contains the origin, then \SimplisticAttack\ can succeed rapidly. 
If $\tau$ is small such that the one-sided form of $\calF^c$ does not contain any point on the positive side of the hyperplane with normal vector $(\theta^*-\theta_0)$, then no attack is possible. 
Lemma~\ref{lem:l2centroid} shows the condition for these two cases.  

\begin{lemma}
Let $u_+$ be $\mu_+$'s projection on $\thetastar-\theta_0$. Similarly, let $u_-$ be $\mu_-$'s projection on $\theta_0-\thetastar$.
\begin{enumerate}

\item 
If $\tau > \min(\norm{\mu_+}, \norm{\mu_-})$ and $K \geq C \log (\lambda/\epsilon)$ for some constant $C$, then $\exists r>0$ such that \SimplisticAttack$\left(\theta_0, S, K, \calF^c, \thetastar, \epsilon, r\right)$ outputs a $\theta$ with $\norm{\theta- \theta^*} \leq \epsilon$. 
\item Otherwise, if $\langle\mu_+, \thetastar-\theta_0\rangle < 0$, $\langle\mu_-, \theta_0-\thetastar\rangle < 0$ and $\tau \leq \min(\norm{u_+}, \norm{u_-})$, then no data poisoner can output $\thetastar$ for any $K$.  
\end{enumerate}
\label{lem:l2centroid}
\end{lemma}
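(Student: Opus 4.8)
The plan is to reduce both parts of Lemma~\ref{lem:l2centroid} to Theorem~\ref{thm:generalconstraints} after identifying the one-sided form of $\calF^c$. Writing $B(z,\rho)$ for the closed $L_2$-ball of radius $\rho$ centred at $z$, flipping a point $(x,-1)\in\calF^c_-$ to $(-x,+1)$ turns $\norm{x-\mu_-}\le\tau$ into $\norm{(-x)-(-\mu_-)}\le\tau$, so
\[
(\calF^c)' \;=\; \bigl(B(\mu_+,\tau)\,\cup\,B(-\mu_-,\tau)\bigr)\,\cap\,B(0,R).
\]
By Lemma~\ref{lem:equivalentf} it suffices to reason about this one-sided set and transfer the conclusion back to $\calF^c$; equivalently, \SimplisticAttack\ and the logistic gradient treat a point and its sign-flip identically, so the behaviour over $\calF^c$ matches that over $(\calF^c)'$.

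For part~1, assume without loss of generality that $\norm{\mu_+}\le\norm{\mu_-}$, so $\tau>\norm{\mu_+}$ and the closed ball of radius $\tau-\norm{\mu_+}>0$ around the origin lies inside $B(\mu_+,\tau)$ (if $\norm{z}\le\tau-\norm{\mu_+}$ then $\norm{z-\mu_+}\le\tau$). Setting $r=\min(\tau-\norm{\mu_+},R)>0$, the segment $L(r,\thetastar-\thetatildezero)$ lies in $B(\mu_+,\tau)\cap B(0,R)\subseteq(\calF^c)'$, and the first part of Theorem~\ref{thm:generalconstraints} applies with this $r$, giving the claim once $K\ge C\log(\lambda/\epsilon)$ for the constant $C$ that theorem produces (which depends on $r$). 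The degenerate case $\thetastar=\thetatildezero$ is trivial with $K=0$.

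For part~2, take $\calG=\{z:\dotprod{z,\thetastar-\theta_0}\le 0\}$, a halfspace, hence convex; since the hypotheses force $\thetastar\neq\theta_0$ and every point of $L(+\infty,\thetastar-\theta_0)$ has strictly positive inner product with $\thetastar-\theta_0$, we get $\calG\cap L(+\infty,\thetastar-\theta_0)=\emptyset$. It remains to show $(\calF^c)'\subseteq\calG$, for which it is enough that $B(\mu_+,\tau)\subseteq\calG$ and $B(-\mu_-,\tau)\subseteq\calG$. The maximum of $\dotprod{z,\thetastar-\theta_0}$ over $z\in B(\mu_+,\tau)$ equals $\dotprod{\mu_+,\thetastar-\theta_0}+\tau\norm{\thetastar-\theta_0}$; the hypothesis $\dotprod{\mu_+,\thetastar-\theta_0}<0$ lets me write $\dotprod{\mu_+,\thetastar-\theta_0}=-\norm{u_+}\,\norm{\thetastar-\theta_0}$, so this maximum is $\norm{\thetastar-\theta_0}(\tau-\norm{u_+})\le 0$ because $\tau\le\norm{u_+}$, whence $B(\mu_+,\tau)\subseteq\calG$. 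Symmetrically, using $\dotprod{\mu_-,\theta_0-\thetastar}<0$ to write the maximum over $B(-\mu_-,\tau)$ as $\norm{\thetastar-\theta_0}(\tau-\norm{u_-})\le 0$, we get $B(-\mu_-,\tau)\subseteq\calG$. Intersecting with $B(0,R)$ only shrinks these sets, so $(\calF^c)'\subseteq\calG$, and the second part of Theorem~\ref{thm:generalconstraints} then yields that no data poisoner can output $\thetastar$.

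Everything beyond these observations is elementary ball geometry, so the only substantial ingredients are the one-sided reduction (Lemma~\ref{lem:equivalentf}) and the invocation of Theorem~\ref{thm:generalconstraints}. The step needing care is the sign bookkeeping in part~2: $\norm{u_+}$ and $\norm{u_-}$ are unsigned, and the hypotheses $\dotprod{\mu_+,\thetastar-\theta_0}<0$ and $\dotprod{\mu_-,\theta_0-\thetastar}<0$ are precisely what is needed to pin down the signs so that the two maximisation computations land on the correct side of $\calG$. It is also worth recording that $\norm{u_+}\le\norm{\mu_+}$ and $\norm{u_-}\le\norm{\mu_-}$, which is why part~2's hypothesis is the complement (the ``otherwise'') of part~1's.
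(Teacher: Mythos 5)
Your proof is correct and follows essentially the same route as the paper: reduce to the one-sided set via Lemma~\ref{lem:equivalentf}, exhibit a segment $L(r,\thetastar-\thetatildezero)$ inside the feasible set for part~1, and enclose the feasible set in the halfspace $\{z:\dotprod{z,\thetastar-\theta_0}\le 0\}$ for part~2, then invoke Theorem~\ref{thm:generalconstraints}. The only difference is cosmetic: in part~1 you take $r=\min(\tau-\norm{\mu_+},R)$ from an inscribed-ball argument (also handling the $\norm{x}\le R$ cap explicitly), whereas the paper solves the quadratic for the chord of $B(\mu_+,\tau)$ along the attack direction; both yield a valid $r>0$.
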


\mypara{Slab Defense.} 
Each of $\calF^s_+$ and $\calF^s_-$ is a `disc' between two hyperplanes with normal vector $\beta$. 
When $\tau$ is large such that either $\calF^s_+$ or $\calF^s_-$ contains the origin, \SimplisticAttack\ can succeed rapidly.
When $\tau$ is small and the projection of $y(\theta^*-\theta_0)$ on $\beta$ is opposite to $\mu_y$, then no attack is possible.
Lemma~\ref{lem:slab} shows the condition for both cases.

\begin{lemma}
Let $\beta = (\mu_+-\mu_-), b_+ = -\beta^{\top}\mu_+, b_-=\beta^{\top}\mu_-$. 
WLOG, assume $\beta^{\top}(\thetastar-\thetatildezero) \geq 0$.\footnote{If $\beta^{\top}(\thetastar-\thetatildezero) < 0$, we can set $\beta=(\mu_--\mu_+)$. The Slab score remains the same.}

\begin{enumerate}%[topsep=0pt]
\item
If $\tau-b_+>0>-\tau-b_+$ or $\tau-b_->0>-\tau-b_-$, then there exist a data poisoner $DP^{\eta}(\theta_0, S, K, \calF^s, \thetastar, \epsilon)$
that outputs a $\theta$ with $\norm{\theta - \theta^*} \leq \epsilon$ for $K \geq C \log (\lambda/\epsilon)$ for some constant $C$. 
\item
If $0 \geq \tau-b_+ > -\tau-b_+$ and $0 \geq \tau-b_- > -\tau-b_-$, then \SimplisticAttack\ cannot output $\thetastar$ for any $K$. In addition, if $\beta^{\top}(\theta^*-\theta_0) \geq 0$, then no data poisoner outputs $\thetastar$ for any $K$.
\end{enumerate}
\label{lem:slab}
\end{lemma}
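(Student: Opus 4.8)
The plan is to reduce everything to Theorem~\ref{thm:generalconstraints} by passing to the one-sided feasible set $(\calF^s)'=\{(yx,+1)\mid (x,y)\in\calF^s\}$; by Lemma~\ref{lem:equivalentf} the defenses induced by $\calF^s$ and $(\calF^s)'$ behave identically, so it suffices to construct (resp.\ rule out) a poisoner for $(\calF^s)'$. In the one-sided picture $(\calF^s)'$ is a union of two bodies of the form ``slab $\cap$ ball'': the image of $\calF^s_+$ is $\{z:\betat z\in[-\tau-b_+,\tau-b_+],\ \norm{z}\le R\}$, and the image of $\calF^s_-$ under the flip $(x,-1)\mapsto(-x,+1)$ is $\{z:\betat z\in[-\tau-b_-,\tau-b_-],\ \norm{z}\le R\}$; this is exactly why the hypotheses of both parts are phrased symmetrically in $b_+$ and $b_-$.

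For part~1, note that $\tau-b_+>0>-\tau-b_+$ says precisely that $0$ lies strictly inside the $\betat$-interval of the first body, and since $\norm{0}=0<R$ the origin is an interior point of $(\calF^s)'$; hence some ball $B(0,\rho)\subseteq(\calF^s)'$, and therefore $L(\rho,\thetastar-\thetatildezero)\subseteq(\calF^s)'$ (the same argument applies verbatim to the second body under the $b_-$ hypothesis). I would then invoke Theorem~\ref{thm:generalconstraints}(1) with feasible set $(\calF^s)'$ and radius $r=\rho$: for $K\ge C\log(\lambda/\epsilon)$, \SimplisticAttack$(\theta_0,S,K,(\calF^s)',\thetastar,\epsilon,\rho)$ returns a $\theta$ with $\norm{\theta-\thetastar}\le\epsilon$, and Lemma~\ref{lem:equivalentf} turns this into the asserted poisoner $DP^\eta(\theta_0,S,K,\calF^s,\thetastar,\epsilon)$.

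For part~2, the hypotheses $0\ge\tau-b_+$ and $0\ge\tau-b_-$ force both $\betat$-intervals into $(-\infty,0]$, so $(\calF^s)'\subseteq\calG:=\{(z,+1):\betat z\le 0\}$, which is convex. Under the extra assumption $\betat(\thetastar-\theta_0)\ge 0$ one checks (using that the degenerate configuration $\betat(\thetastar-\theta_0)=0$ together with a slab face through the origin is excluded by the algebraic relation between $b_+$, $b_-$, the centroids, and $\tau>0$) that $\calG\cap L(+\infty,\thetastar-\theta_0)=\emptyset$, so Theorem~\ref{thm:generalconstraints}(2) applies to $(\calF^s)'$ and, via Lemma~\ref{lem:equivalentf}, no data poisoner outputs $\thetastar$ for any $K$. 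For the \SimplisticAttack-specific claim (which drops the assumption on $\betat(\thetastar-\theta_0)$ but keeps the running WLOG $\betat(\thetastar-\thetatildezero)\ge 0$) I would argue directly: at iteration $t$ the algorithm appends either $(c x_t,+1)$ or $(-c x_t,-1)$ with $x_t=\gamma_t(\thetastar-\thetatilde_t)$, and a short computation with the logistic gradient shows that in both cases the OGD update displaces $\thetatilde_t$ by a \emph{positive} multiple of $+c x_t=c\gamma_t(\thetastar-\thetatilde_t)$. Hence $\thetatilde_t$ stays on the segment from $\thetatildezero$ toward $\thetastar$, $\thetastar-\thetatilde_t$ remains a positive multiple of $\thetastar-\thetatildezero$, and so $\betat x_t=\gamma_t\,\betat(\thetastar-\thetatilde_t)\ge 0$ for all $t$; but membership of the appended point in $\calF^s$ (equivalently, of its flip in $\calG$) forces its $\betat$-value to be $\le 0$, and with $c\ge 0$ this is possible only when $c=0$, i.e.\ the appended point has zero gradient. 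Therefore $\thetatilde_{t+1}=\thetatilde_t$ at every step, so inductively $\thetatilde_t=\thetatildezero$ throughout; since \SimplisticAttack\ never modifies the clean prefix $S$, the model it outputs equals $\thetatildezero$, which (outside the trivial case where clean learning already lands within $\epsilon$ of $\thetastar$) is at distance $>\epsilon$ from $\thetastar$, so the attack fails.

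The step I expect to be the real obstacle is the last one: verifying rigorously that \SimplisticAttack's own updates leave the poisoning direction invariant and that half-space membership genuinely forbids every feasible point with nonzero gradient. In particular one must dispatch the boundary sub-case in which a slab face passes through the origin while $\thetastar-\thetatildezero$ lies in $\beta^{\perp}$ — that is where the identity relating $b_+$, $b_-$ and $\norm{\beta}$ (together with $\tau>0$ and, where relevant, the sign convention fixed by the footnote) is needed. Everything else is a routine application of Theorem~\ref{thm:generalconstraints} and the one-sided equivalence of Lemma~\ref{lem:equivalentf}.
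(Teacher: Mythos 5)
Your proposal is correct and follows the same skeleton as the paper's proof: pass to the one-sided set via Lemma~\ref{lem:equivalentf}, exhibit a segment $L(r,\thetastar-\thetatildezero)$ inside it for the rapid case, and trap it in the halfspace $\calG=\{z:\betat z\le 0\}$ for the impossibility case, invoking Theorem~\ref{thm:generalconstraints} both times. The local differences are worth noting. For part~1 the paper computes explicitly that $L\bigl((\tau-b_+)/\norm{\beta},\thetastar-\thetatildezero\bigr)\subseteq\calF^s_+$ (and analogously for $b_-$), which uses the sign convention $\betat(\thetastar-\thetatildezero)\ge 0$ to control the lower slab face; your interiority-of-the-origin argument reaches the same conclusion without needing that sign at all, which is a mild simplification. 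For the \SimplisticAttack{} claim in part~2 the paper does not analyze the algorithm's dynamics: it simply reapplies the positive-combination argument behind Theorem~\ref{thm:generalconstraints}(2) with $\thetatildezero$ in place of $\theta_0$, which rules out \emph{any} poisoner that only appends feasible points after the clean stream (of which \SimplisticAttack{} is one); your step-by-step argument that every feasible appended point must have $c=0$, so the model never leaves $\thetatildezero$, proves the same fact more laboriously but is sound. One caveat: your parenthetical claim that the degenerate configuration $\betat(\thetastar-\theta_0)=0$ (or $\betat(\thetastar-\thetatildezero)=0$) is ``excluded by the algebraic relation between $b_+$, $b_-$ and $\tau$'' is not correct — those quantities do not constrain $\thetastar$ or $\theta_0$; however, the paper's own proof also silently assumes the strict inequality, so you are no worse off, and when $0>\tau-b_+$ and $0>\tau-b_-$ hold strictly the gap can be closed by shrinking $\calG$ to $\{z:\betat z\le\max(\tau-b_+,\tau-b_-)\}$, which misses the ray even when it lies in the hyperplane $\betat z=0$.
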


\section{The Fully-Online Setting}
\label{sec:fully-online}
Our theory in Section~\ref{sec:analysis} discusses the effectiveness of attack and defense in the semi-online setting.
Do effective semi-online attacks always lead to effective fully-online attacks, where the attacker wants the online model to have high loss over the entire time horizon? 
We show by two examples that the result varies on different learning tasks, and thus the existence of effective semi-online attack alone is not sufficient.
The poisoning effect is quickly negated by clean points in one case but not in the other, which suggests the effectiveness of fully-online attack also critically depends on the difficulty of the learning tasks. As a result, the learner also needs to consider its defense mechanism's impact on the learning process. 

\mypara{Effectiveness of Poisoning on Different Tasks.}
We consider the following learner-attacker pair. The learner uses the standard OGD algorithm with initial value $\theta_0 = 0$, learning rate $\eta=1$, and also an $L_2$ norm defense with $\calF = \{(x,y)|\norm{x}\leq 1\}$, which is always in easy regime for semi-online attacks as shown in Theorem~\ref{thm:boundedpoisoner}.
The attacker can inject $10\%$ as many poisoning points as the clean points at any position.
We then show two different tasks in Lemma~\ref{lem:fullyonlinehard} and~\ref{lem:fullyonlineeasy}, in which fully-online attack is hard and easy, respectively.

\begin{lemma}[hard case]
\label{lem:fullyonlinehard}
Consider a data distribution with input $x\in\{1, -1\}$ and label $y = \sgn(x)$. The attacker cannot cause classification errors on more than $10\%$ of the clean examples.\footnote{The bound is tight as an attacker can always cause as many errors as the number of poisoning points under $L_2$ norm defense.}
\end{lemma}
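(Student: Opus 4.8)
Everything happens on the real line, since $\theta_t\in\mathbb R$ and each example $(x,y)$ — clean or poisoned — enters the logistic gradient only through the scalar $v:=yx$, with update $\theta_{t+1}=\theta_t+v/(1+\exp(v\theta_t))$ when $\eta=1$. For a clean example $x\in\{1,-1\}$, $y=\sgn(x)$, we have $v=1$; for a poisoned example, $\norm{x}\le 1$ forces $v\in[-1,1]$. First I would record three elementary facts. (i) A clean example ($v=1$) strictly increases $\theta$, by at least $\tfrac12$ whenever $\theta_t\le 0$ and by less than $1$ always; moreover the clean example seen at time $t$ is misclassified by $\theta_t$ precisely when $\theta_t<0$ (I count the tie $\theta_t=0$ as correct, which is what makes the $10\%$ bound tight, cf.\ the footnote). (ii) Maximizing the decrement $-v/(1+\exp(v\theta_t))$ over $v\in[-1,1]$ shows the maximizer is $v=-1$ with decrement $1/(1+\exp(-\theta_t))$; hence a poisoned example lowers $\theta$ by at most $\tfrac12$ when $\theta_t\le 0$, and from any $\theta_t\ge 0$ it cannot push $\theta_{t+1}$ below $-\tfrac12$ (because $\theta\mapsto\theta-1/(1+\exp(-\theta))$ is increasing and equals $-\tfrac12$ at $\theta=0$).

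The heart of the proof is a potential argument. Let $T(\theta)=\theta+1/(1+\exp\theta)$ be the clean update and define $\Psi(\theta)=0$ for $\theta\ge 0$ and $\Psi(\theta)=1+\Psi(T(\theta))$ for $\theta<0$; since $T(\theta)\ge\theta+\tfrac12$ on $\theta\le 0$, $\Psi$ is a well-defined, nonincreasing, $\mathbb Z_{\ge 0}$-valued step function with $\Psi(0)=0$ — informally, $\Psi(\theta_t)$ is the number of further clean examples it would take to bring the model back to the correct side if no more poisoning happened. Writing $M_t$ for the number of misclassified clean examples among the first $t$ rounds and $P_t$ for the number of poisoned rounds so far, set $\Phi_t=M_t+\Psi(\theta_t)$.

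I would then verify $\Phi_{t+1}-\Phi_t\le\mathds{1}[\text{round }t\text{ is poisoned}]$ round by round. On a clean round with $\theta_t<0$: $M_t$ rises by $1$ while $\theta_{t+1}=T(\theta_t)$, so by the defining recursion $\Psi(\theta_{t+1})=\Psi(\theta_t)-1$ and $\Phi$ is unchanged. On a clean round with $\theta_t\ge 0$: $\theta_{t+1}=T(\theta_t)>0$, so $\Psi$ stays $0$ and $\Phi$ is unchanged. On a poisoned round, $M_t$ is unchanged and it remains to show $\Psi(\theta_{t+1})\le\Psi(\theta_t)+1$. If $\theta_{t+1}\ge\theta_t$ this follows from monotonicity of $\Psi$; otherwise, if $\theta_t\ge 0>\theta_{t+1}$ then $\theta_{t+1}\ge-\tfrac12$ by (ii), so $T(\theta_{t+1})\ge T(-\tfrac12)>0$ and $\Psi(\theta_{t+1})=1=\Psi(\theta_t)+1$; and if $\theta_t<0$ then $\theta_{t+1}\ge\theta_t-\tfrac12$ by (ii), so $T(\theta_{t+1})>\theta_{t+1}+\tfrac12\ge\theta_t$, giving $\Psi(\theta_{t+1})=1+\Psi(T(\theta_{t+1}))\le 1+\Psi(\theta_t)$. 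Summing over $t=0,\dots,T'-1$ and using $\theta_0=0$, hence $\Phi_0=\Psi(0)=0$, yields $M_{T'}\le\Phi_{T'}\le P_{T'}=K$; as $K$ is $10\%$ of the number of clean examples by hypothesis, the lemma follows. Note the argument never uses where the poisoned points are placed, so it covers the ``any position'' case as well.

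The step I expect to be the real obstacle is designing the potential: the naive choice $-c\,\theta_t$ fails, because a poisoned point that only barely pushes $\theta$ below zero still forces one full misclassification on the next clean example yet changes $-c\,\theta_t$ by almost nothing, so no linear potential can absorb it — the recursively defined $\Psi$ is exactly the device that charges ``one clean example per dip below zero.'' A secondary point requiring care is fact (ii): it is precisely the norm cap $\norm{x}\le 1$ (i.e.\ $|v|\le 1$) that prevents one poisoned point from dragging $\theta$ arbitrarily far negative, and the claim genuinely fails without it.
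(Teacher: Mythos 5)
Your proof is correct, but it takes a genuinely different route from the paper's. The paper argues combinatorially: it extracts disjoint ``intervals of interest'' during which $\theta_t\le 0$, shows---using essentially the same two elementary facts you isolate (a feasible poison lowers $\theta$ by at most $\tfrac12$ when $\theta_t\le 0$ and cannot push it below $-\tfrac12$ from $\theta_t\ge 0$, while a clean point raises a nonpositive $\theta$ by at least $\tfrac12$ and restores positivity from $[-\tfrac12,0]$)---that each interval contains at least as many poisoned as clean points, and then charges every misclassified clean example to a poison in its own interval. You replace that interval bookkeeping with an amortized argument: the recursively defined potential $\Psi(\theta)$ (number of clean updates needed to return to $\theta\ge 0$) and the invariant that $M_t+\Psi(\theta_t)$ never increases on clean rounds and increases by at most one per poisoned round. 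What each buys: your version is more streamlined (no interval extraction, no argument by contradiction), is insensitive to poison placement by construction, and handles the full feasible set $v=yx\in[-1,1]$ explicitly, whereas the paper silently reduces to the extreme poison $(-1,+1)$; the paper's version is more pictorial and avoids defining a recursive potential. Three small points to tidy in yours: the claim that $v=-1$ maximizes the decrement fails for sufficiently negative $\theta_t$ (an interior $v$ does better once $\theta_t\lesssim -1.28$), but the only consequences you use---decrement at most $\tfrac12$ when $\theta_t\le 0$, which follows directly from $v\theta_t\ge 0$ hence $1+\exp(v\theta_t)\ge 2$, and $\theta_{t+1}\ge-\tfrac12$ from $\theta_t\ge 0$---remain valid; the nonincreasing property of $\Psi$ should be justified by monotonicity of $T$ (e.g.\ $T'\ge\tfrac34$), not merely by $T(\theta)\ge\theta+\tfrac12$; and the poisoned-round subcase $\theta_t>\theta_{t+1}\ge 0$ should be mentioned (trivial, since $\Psi$ stays $0$). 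Finally, your convention of counting $\theta_t=0$ as correct plays the role of the paper's ``$+1$ corner case'' footnote; either convention leaves the stated bound essentially intact.
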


\begin{lemma}[easy case]
\label{lem:fullyonlineeasy}
Consider an input space $\mathbb{R}^d$ with $d=10,000$. The clean inputs are uniformly distributed over $2d$ points $\{e_1, \cdots, e_d, -e_1, \cdots, -e_d\}$, where $e_i$ is the $i$-th basis vector. The label $y=+1$ if $x \in \{e_1, \cdots, e_d\}$, and $y=-1$ otherwise.
Then the attacker can, with probability more than $0.99$, cause classification error on $\geq 50\%$ of the first $1,000,000$ clean examples.
\end{lemma}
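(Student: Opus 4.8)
The plan is to exhibit an explicit fully-online attack and argue it succeeds with probability $>0.99$ over the draw of the clean stream. \textbf{Dynamics.} For logistic loss the OGD update from an example $(x,y)$ adds $\eta\frac{yx}{1+\exp(y\,\theta^\top x)}$ to $\theta$, a vector of norm at most $\eta\|x\|\le 1$; every example here is $\pm e_i$, so an update from an example touching coordinate $i$ changes only $\theta_i$. With $\theta_0=0$: a clean example touching coordinate $i$ is misclassified exactly when $\theta_i\le 0$, and when used for an update it \emph{increases} $\theta_i$ by $\frac{1}{1+\exp(\theta_i)}\in(0,1)$; a poisoning point $(x,+1)$ with $\|x\|\le 1$ changes $\theta$ by $-\frac{x}{1+\exp(-\theta^\top x)}$. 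The decisive observation is that if $x=-\frac{1}{\sqrt m}\mathbf{1}_B$ is a unit vector supported on a set $B$ of $m$ coordinates with currently non-negative entries of $\theta$, then $\theta^\top x\le 0$, so the logistic factor is $\ge\frac12$ and this single unit-norm poisoning point subtracts at least $\frac{1}{2\sqrt m}$ from \emph{each} of the $m$ coordinates of $B$ at once. So one unit-norm poisoning point cancels $\Theta(\sqrt m)$ units of upward drift while a clean example injects only $O(1)$ units---this $\Theta(\sqrt d)$ gap, absent in the $d=1$ hard case of Lemma~\ref{lem:fullyonlinehard}, is what makes the attack cheap.

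\textbf{The attack.} Fix a set $A$ of $\lceil 0.55\,d\rceil$ coordinates; the attacker never perturbs any coordinate outside $A$. Partition the stream into consecutive epochs of $P$ clean examples for a suitable constant $P$ (so an epoch also contains $\approx P/10$ poisoning slots). Writing $B_k\subseteq A$ for the set of $A$-coordinates used in a clean update during epoch $k$, the attacker spends the poisoning slots of epoch $k+1$ injecting dense unit vectors $(-\frac{1}{\sqrt{|S|}}\mathbf{1}_S,+1)$, where $S$ is the subset of $B_k$ still positive (re-evaluated after each injection), until every coordinate of $B_k$ is back to $\le 0$. This restores, by the end of epoch $k+1$, the invariant ``$\theta_i\le 0$ for every $i\in A$ untouched in the last two epochs.'' Since consecutive clean touches of a fixed coordinate are $\Theta(d)$ clean examples---hence many epochs---apart, every clean example touching a coordinate of $A$ finds that coordinate $\le 0$ and is misclassified; the only exceptions are coordinates touched twice within a two-epoch span, treated as a lower-order term.

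\textbf{Budget and probability.} The total upward mass the attacker must cancel over the first $N=10^6$ clean examples is at most the number of clean $A$-touches, $\lesssim 0.55 N$, and a correction epoch with $|B_k|=b$ spends only $O(\sqrt b)$ poisoning points to cancel it (integrate the per-point decrement $\gtrsim 1/\sqrt b$). Taking $P$ a sufficiently large constant makes $b=\Theta(P)$ large enough that the $O(\sqrt P)$ points needed per epoch fit within the $P/10$ available, and summing over epochs uses $O(N/\sqrt P)\le 0.1 N$ points total. For the probability bound, union over the bad events of the random stream: that fewer than $0.5 N$ of the first $N$ clean examples touch $A$ (Hoeffding, probability $e^{-\Omega(N)}$), and that some coordinate is touched anomalously often within one epoch or within two consecutive epochs (Poisson/Chernoff tails, using $P/d=o(1)$, union over the $O(N)$ coordinate--epoch pairs). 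Off this bad event---total probability $<0.01$---the attack deterministically misclassifies all but $o(N)$ of the $\ge 0.55 N$ clean $A$-touches, which is $\ge 0.5 N$.

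\textbf{Main obstacle.} The conceptual hurdle is spotting the dense-direction trick; after that the argument is bookkeeping. The one delicate point is choosing $P$: it must be small enough ($P\ll d$) that with high probability no coordinate is touched more than once per epoch---otherwise a coordinate bumped above $0$ by a clean example can be touched again before the correction catches up, yielding non-error touches and eroding the fraction below $50\%$---yet large enough that the batch size $|B_k|=\Theta(P)$ makes the dense corrections cheap enough to respect the $10\%$ budget. Both hold comfortably, since $P$ need only lie in the wide range $\Omega(1)\le P\ll d$.
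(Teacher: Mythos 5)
Your central idea---a unit-norm poisoning point $-\tfrac{1}{\sqrt m}\mathbf{1}_B$ supported on currently nonnegative coordinates, which buys a $\Theta(\sqrt m)$ advantage over the per-example drift of clean points---is exactly the trick the paper uses. But your bookkeeping (fixed target set $A$ with $|A|=0.55d$, epoch-by-epoch correction) has a genuine quantitative hole at precisely the point you call ``delicate.'' Two constraints on $P$ must hold simultaneously. First, the budget: an epoch's corrections cost up to about $2\sqrt{b_k}$ points with $b_k$ close to $0.55P$ (the logistic factor is only guaranteed to be $\ge 1/2$, and $b_k$ fluctuates upward over the $N/P$ epochs), and this must fit in $P/10$ slots, forcing $P$ to be at least a few hundred (roughly $P\gtrsim 220$--$400$). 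Second, the repeat-touch count: a clean $A$-touch can fail to be an error whenever its coordinate was touched within the current or previous epoch, a window of about $1.5P$ clean examples, so the expected number of such non-error $A$-touches is about $1.5PN/d = 150P$. Because you only target $55\%$ of coordinates, your margin over the $50\%$ goal is just $5\times 10^4$ examples, so you need $150P \lesssim 5\times 10^4$, i.e.\ $P\lesssim 330$. The admissible range is therefore razor-thin or empty (with the worst-case factor $2$ in the correction cost, $P\ge 400$ is needed and the two constraints are incompatible), not the ``wide range $\Omega(1)\le P\ll d$'' you assert. Relatedly, the statement ``with high probability no coordinate is touched more than once per epoch'' is false once $P\gtrsim\sqrt d=100$ (birthday paradox); repeat touches are not a rare event to be union-bounded away but a population of expected size $\Theta(NP/d)$, comparable to your entire margin at the $P$ forced by the budget, so they must be counted against the $5$-point slack rather than dismissed as lower order.

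The fix is easy, which is why I'd call this a repairable gap rather than a wrong approach: drop the restriction to $55\%$ of coordinates and correct every touched coordinate. This raises the per-epoch cost only by a factor $\sqrt{1/0.55}\approx 1.35$ (the cost scales as $\sqrt{b_k}$) while widening the error margin from $5$ to $50$ percentage points; then e.g.\ $P=1000$ satisfies both constraints with room to spare ($2\sqrt{1000}\approx 63\le 100$ slots per epoch, and $1.5PN/d=1.5\times 10^5$ bad touches against a $5\times 10^5$ margin). Note that the paper sidesteps this local accounting entirely: its attacker always pushes down $1000$ nonnegative coordinates after every ten clean points, and a global potential argument on the coordinate sum (bounded below by $-d/\sqrt{1000}$, decreased by $\ge\sqrt{1000}/2$ per nonzero poisoning point, increased by $\le 0.52$ per clean point) shows the model can have $\ge 1000$ nonnegative coordinates in at most $40\%$ of cycles; Hoeffding over the remaining $\ge 6\times 10^5$ clean examples, each correct with probability $<0.1$, then gives the $0.99$ bound with large slack. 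Either route works, but your write-up as it stands does not establish that a valid $P$ exists.
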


In the first case, learning is easy as each clean example contains all the information needed, while in the second, learning is hard as each clean example only contributes to one dimension among many. A fully-online attack is effective if poisoning is easier than learning, and is less likely to succeed otherwise. We further validate this finding over different real-world data sets with different complexity for learning in Section~\ref{sec:exp}.

\mypara{Impact of Defenses.}
In reality, the defense mechanism filters both the clean and the poisoning points.
Therefore, the learner cannot set the feasible set arbitrarily small since such defense also prohibits learning useful patterns.
For example, if the $L_2$ norm bound is less than $1$ for the learner in Lemma~\ref{lem:fullyonlinehard} and~\ref{lem:fullyonlineeasy},
then all clean points will be filtered out, and the model will be solely determined by poisoning points.
The learner needs to find an appropriate defense parameter so the feasible set keeps enough clean points while slows the attack.   
We evaluate the effectiveness of defenses over a range of defense parameters in Section~\ref{sec:exp}.

\section{Experiments}
\label{sec:exp}
\begin{figure*}
\centering
\includegraphics[width=0.9\textwidth]{./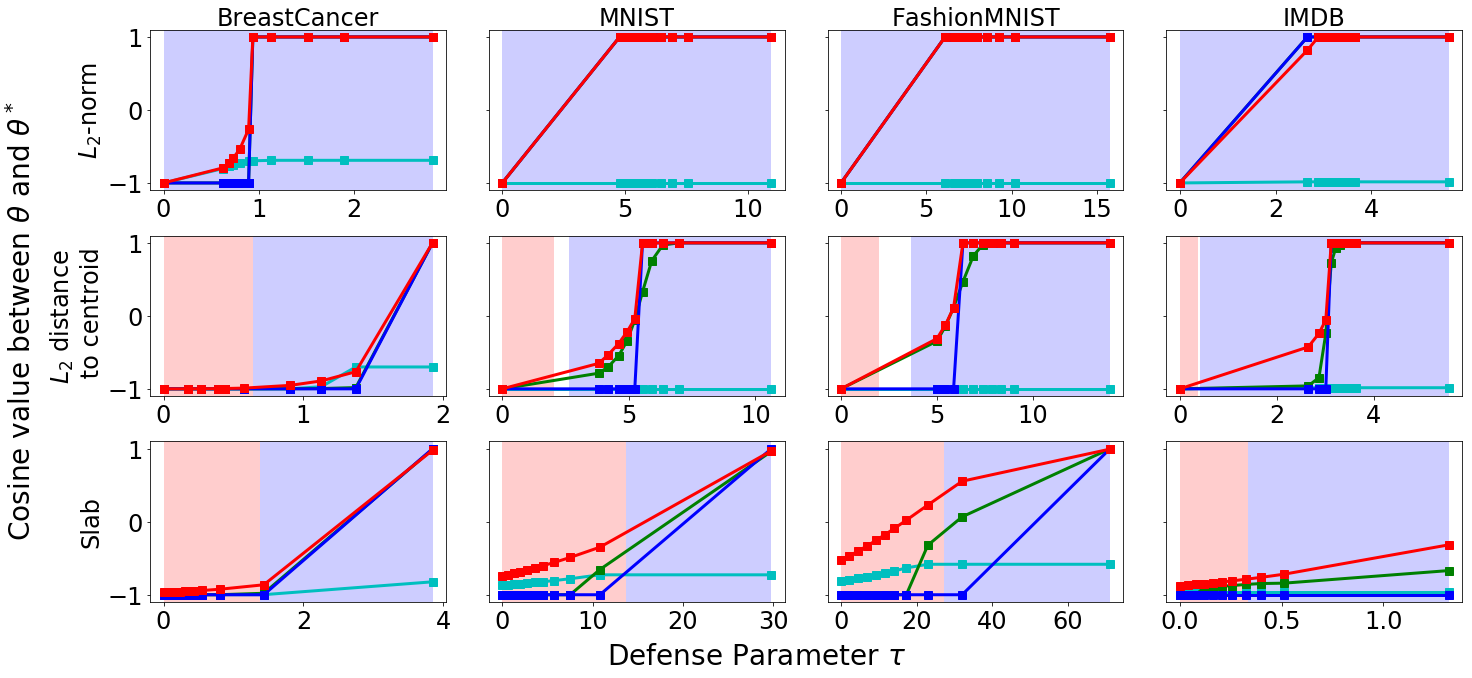}
\includegraphics[height=2.5in]{./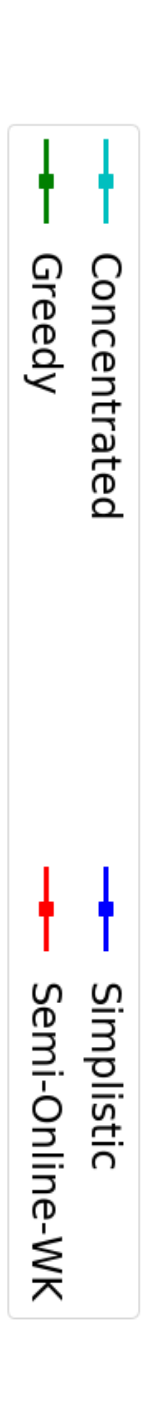}
\caption{The plot of $\cos(\theta, \theta^*)$ against defense parameter $\tau$ for the semi-online attacks where the attacker uses a fixed number of poisoning points.
Small $\tau$ means stronger defense.
Larger $\cos(\theta, \thetastar)$ means more successful attack.
Red background region indicates hard regime, while blue background indicates easy regime.
{\bf Top to bottom}: $L_2$-norm, $L_2$-distance-to-centroid and Slab defense.
{\bf Left to Right}: BreastCancer, MNIST, FashionMNIST, IMDB.
}
\label{fig:semionline}
\end{figure*}

Is our analysis on the rapidity of attacks confirmed by practice? And how does the online classification error rate vary across different defenses and real-world data sets, given that the relative speed of poisoning and learning matters? We now investigate these questions through an experimental study. In particular, we ask the following questions.

\begin{itemize}[leftmargin=*]
\item Do practical data-poisoning defenses exhibit the different regimes of effectiveness that we see in theory in the semi-online setting? 
\item How successful are fully-online attacks across different data sets, given that the tasks have different difficulty levels of learning?
\end{itemize}

These questions are considered in the context of three defenses -- $L_2$-norm, $L_2$-distance-to-centroid and Slab.\footnote{The labeling oracle defense is not included as we lack the appropriate labeling functions for the real-world tasks involved.} The defenses do display different regimes of effectiveness as predicted, and the loss caused by fully-online attacks is more significant on data sets that are harder to learn. 

\subsection{Semi-online Experiment Methodology}
\mypara{Baseline Attacks.} To evaluate the effectiveness of the defenses, we consider four canonical baseline attacks. All attacks we use append the poisoning examples to the end of the clean data stream, as~\cite{wang2018data} shows that this works best for our setting. 

The {\bf{\Straight}} attack uses the \SimplisticAttack\ algorithm.
The {\bf{Greedy}} attack~\citep{liu2017iterative}, finds, at step $t$, the example $(x_t, y_t) \in \calF$ that minimizes $\norm{\theta_{t+1}-\thetastar}$, where $\theta_{t+1}$ is derived from the OGD update rule. 
The {\bf{Semi-Online-WK}} attack~\citep{wang2018data} finds $K$ poisoning examples together by maximizing the loss of the resulting model on a clean validation dataset. The optimization problem is solved using gradient descent. The poisoning points are again inserted at the end.
As a sanity-check, we also include an offline baseline -- the {\bf{Concentrated}} attack~\citep{koh2018stronger}, which is effective against many offline defenses.
We explain its adaption to the online setting in Appendix~\ref{sec:detailedprocedures}.

\mypara{Data Sets.} We consider four real-world data sets -- UCI Breast Cancer (dimension $d=9$), IMDB Reviews~\citep{imdb} ($d=100$),\footnote{The features are extracted using Doc2Vec on the unlabeled reviews by setting $d=100$.} MNIST 1v7 ($d=784$) and FashionMNIST~\citep{fmnist} Bag v.s. Sandal $(d=784)$. The standard OGD algorithm is able to learn models with high accuracy on all clean data sets. 
Each data set is split into three parts: initialization (used for the data-driven defenses), training and test sets. Procedural details are in Appendix~\ref{sec:datapreparation}.

\mypara{Experimental Procedure.} 
The learner starts with $\theta_0 = 0$ and uses the OGD algorithm to obtain a model $\theta$ by iterating over the sequence of examples output by the poisoner.
The attacker sets the target model $\thetastar = -\tilde{\theta}_0$, where $\tilde{\theta}_0$ is the learner's final model if it updates only on the clean stream. 
Poisoning examples are restricted inside the feasible set induced by the learner's defense, and scores for the data-driven defenses are calculated based on the initialization set.
The number of poisoning examples $K$ is $80$ for Breast Cancer, $100$ for MNIST/FashionMNIST and $200$ for IMDB.

To avoid confusion by scaling factors, we evaluate the effect of poisoning by $\cos(\theta, \thetastar)$, the cosine similarity between the final model $\theta$ and the target model $\thetastar$. Successful attacks will result in large $\cos(\theta, \thetastar)$ values.

For $L_2$-norm defense, the defense parameter $\tau$ is the maximum $L_2$ norm of the input; for $L_2$-distance-to-centroid, $\tau$ is the maximum $L_2$ distance between an input to its class centroid, and for Slab, it is the maximum Slab score. 
Smaller $\tau$ means stronger defense.
We consider ten $\tau$ values.
The $i$-th value is the $10i$-th percentile line of the corresponding measure for points in the clean stream. 
We choose $\tau$ data-dependently to ensure the range of $\tau$ is practical. 

\subsection{Fully-online Experiment Methodology}
\mypara{Baseline attacks and data sets.}
Existing fully-online attack methods~\citep{wang2018data, zhu2019} are either too computationally expensive for high dimensional data and long time horizons, or require knowledge of the entire clean stream. 
Instead, we use a generic fully-online attack scheme which reduces to semi-online attacks as follows.
At time $t$ when the learner gets examples from the attacker, the attacker runs a semi-online attack algorithm with $\theta_t$ as the initial value and a model with high fully-online loss as the target $\thetastar$ to generate the poisoning examples.\footnote{Such attack does \emph{not} require knowledge of future clean points, which either needs to be given or estimated in previous methods.} 
We construct three fully-online attackers using {\bf \Straight}, {\bf Greedy} and {\bf Semi-Online-WK} as the semi-online subroutines respectively, and name the baselines after their semi-online subroutines.
We use the same four data sets as in the semi-online experiment. 

\mypara{Experiment Procedure.} 
The sequence length $T$ is $400$ for Breast-Cancer and $1000$ for MNIST, FashionMNIST and IMDB. For each run of experiment, a set of attack position $I$ with $|I| = 0.1T$ is first drawn uniformly random over all possible positions.\footnote{We also try a more powerful poisoner with $|I|=0.2T$; the result is shown in Appendix~\ref{sec:additionalplots} and has similar trend.} 
The attacker generates a poisoning example when $t\in I$, otherwise a clean sample is drawn from the training set.
The learner starts with $\theta_0 = 0$ and updates on the poisoned data stream.
Examples that fall outside the feasible set are not used for updating model.  
At each iteration $t$, the learner also predicts the class label of the input $x_t$ as $f_t(x_t) = \sgn(\theta_t^{\top}x_t)$. 
The effect of poisoning is measured by the online classification error rate over the entire time horizon 
$\sum_{i=0}^{T-1}\mathds{1}(f_t(x_t) \neq y_t)\mathds{1}(t\notin I)$.

For each defense, we use five $\tau$ values, corresponding to five feasible sets $\calF$ which contains 30\%, 50\%, 70\%, 90\% and 100\% of the clean examples in the stream. 

\begin{figure*}
\centering
\includegraphics[width=0.9\textwidth]{./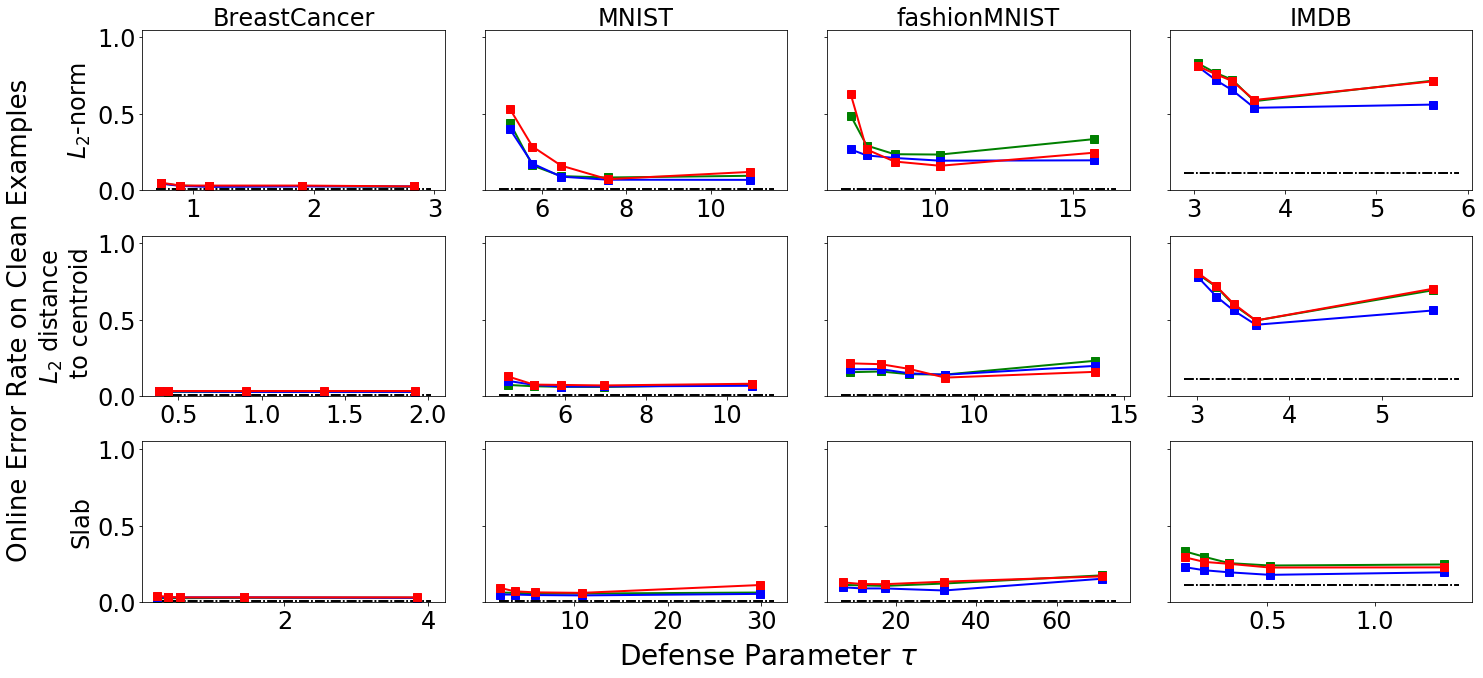}
\includegraphics[height=2.5in]{./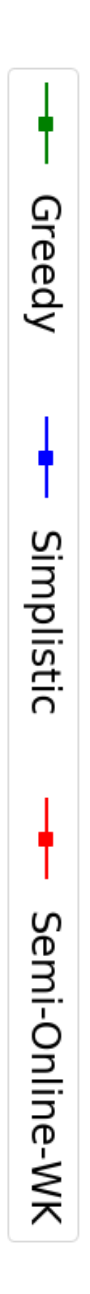}
\caption{Online classification error rate against defense parameter $\tau$ when 10\% of the examples in the data stream comes from the attackers. 
The dashdot line shows the error rate of the offline optimal classifier.
Small $\tau$ means the defense filters out more examples, both clean and poisoned.
Larger online classification error means more successful attack.
{\bf Top to bottom}: $L_2$-norm, $L_2$-distance-to-centroid and Slab defense.
{\bf Left to Right}: BreastCancer, MNIST, FashionMNIST, IMDB.
}
\label{fig:fullyonline}
\end{figure*}

\subsection{Results and Discussion}

Figure~\ref{fig:semionline} presents the cosine similarity between the final model $\theta$ and the target model $\theta^*$ against various defense parameters $\tau$. An attack is rapid if it can make $\cos(\theta, \theta^*)$ close to $1$ within the fixed budget of poisoning examples.
In contrast, a defense is successful if $\cos(\theta, \theta^*)$ is close to $-1$, i.e. $\theta$ is close to the model under no poisoning. 
We also show $\theta$'s error rate on clean test examples in Appendix~\ref{sec:additionalplots}.
Figure~\ref{fig:fullyonline} presents the online classification error rate over clean examples in the stream against defense parameter $\tau$ when $10\%$ of the stream is poisoned.
A defense is effective if the error rate is low for all attack baselines.

\subsubsection{Semi-online Experiments}
\mypara{Overall Performance of Attacks.} \Straight, Greedy and Semi-Online-WK are effective online attacks -- $\cos(\theta, \thetastar)$ increases as $\tau$ increases and approaches $1$ for large $\tau$ in most cases. Concentrated is ineffective for ignoring the online nature of the problem. Semi-Online-WK typically performs the best for its flexibility in poisoning examples selection. 

\mypara{Regime of Effectiveness for Defenses vs. Theories.} In Figure~\ref{fig:semionline}, we highlight the easy and hard regime predicted by our analysis in blue and red background. The boundaries are calculated based on the initialization data set. The results corroborate our theories in the following aspects. 
First, poisoning is more rapid as $\tau$ increases, as larger $\tau$ corresponds to larger feasible set. 
Second, we see from the figure that for most datasets and defences, there are regimes where \SimplisticAttack\ is rapid, and regimes where no attacks work -- e.g. easy and hard. However, their boundaries do not always completely agree with the theoretically predicted boundaries -- as is to be expected. This is because the multiplicative factor $C$ in the lowerbound of poisoning speed implicitly depends on $\tau$; smaller $\tau$ leads to larger $C$, and thus the number of poisoning examples required increases.
We show in an additional experiment in Appendix~\ref{sec:additionalplots} that the attacker can still make $\cos(\theta, \theta^*)$ close to $1$ for small $\tau$ in the predicted easy regime under a larger learning rate $\eta=1$, which lowers $C$.

\mypara{Comparison between Defenses.} $L_2$-norm defense is the weakest as it is always in easy regime. 
Slab, in contrast, is potentially the strongest defense mechanism because it has the widest hard regime -- the defense is in hard regime even when the feasible set keeps $70\%$ to $80\%$ of clean examples.

\subsubsection{Fully-online Experiments}
\mypara{Overall Effectiveness of Defenses.}
On all datasets, Slab defense is the most effective because it can slow down poisoning while still keeping most clean examples.
$L_2$-norm is the least effective because it always permits the attacker to move the online model towards $\theta^*$.
$L_2$-distance-to-centroid is more effective than $L_2$-norm on BreastCancer, MNIST and FashionMNIST but not on IMDB.

\mypara{Online Error Rate across Data Sets and its Implication.}
Over all defense methods, IMDB has the largest online classification error rate and BreastCancer has the lowest.
MNIST and FashionMNIST have similar behaviors.
The results corroborate our intuition on the impact of the learning task's difficulty level. 
BreastCancer is an easy low dimensional learning task, so poisoning is quickly negated by learning on clean examples.
IMDB is a harder task with high dimensional inputs; poisoning is more rapid than learning for weak defenses and the attack can cause more online classifcation error than the number of poisoning examples.
MNIST/FashionMNIST are high dimensional but the inputs are more structured, e.g. all centered and similar in size, therefore the results are in between.

\mypara{Effectiveness of Defenses vs. Choice of Defense Parameter.}
The lowest online classification error rate mostly occurs when $\tau$ is set to include $70\%$ or $90\%$ of the examples in the stream.
Defenses with too small $\tau$ filter out too many clean example and slows down learning, while too large $\tau$ allows fast poisoning. 
The result suggests that a moderate $\tau$ that keeps the majority of clean examples is good in practice.

\section{Conclusion}
In conclusion, we perform a thorough theoretical and experimental study of defenses against data poisoning in online learning.  
In semi-online setting, we show different regimes of effectiveness for typical defenses in theory, and validate the predicted effectiveness in experiment.
In fully-online setting, we show by example that poisoning effect also depends on the difficulty of the learning task, and validate the finding on real-world data sets.
Our experiment suggests that the Slab defense is often highly effective in practice.
For future work, we want to extend the analysis to more complex models such as neural nets, to more attack objectives such as targeted attack at specific test instance, clean-label stealthy attack and backdoor attack, and provide defenses with provable guarantees in each case.

\section{Acknowledgement}
We thank ONR under N00014-16-1-261, UC Lab Fees under LFR 18-548554 and NSF under 1804829for research support. This project is also partially supported by Air Force Grant FA9550-18-1-0166,the National Science Foundation (NSF) Grants CCF-FMitF-1836978, SaTC-Frontiers-1804648 andCCF-1652140 and ARO grant number W911NF-17-1-0405, and we thank for all the supports.

%\medskip
%\newpage
\bibliographystyle{plainnat}
\bibliography{online}

\appendix
\section{Proofs to Theorems and Lemmas}

\subsection{Proof to Theorem~\ref{thm:boundedpoisoner}}
\label{sec:boundedpoisonerproof}
Recall that $\gamma_0=\min\left(\frac{R}{\norm{\thetatildezero-\thetastar}}, 1/\eta\right)$. In round $t$, we pick $\gamma_t = \min(\gamma_t^*, \gamma_0)$,
where $\gamma_t^*$ is the solution to $\frac{\gamma}{1+\exp(\thetatilde^{\top}_t(\thetastar-\thetatilde_t)\gamma)} = \frac{1}{\eta}$. 
Suppose we set $y_t = 1$ and $x_t = \gamma_t (\thetastar - \thetatilde_t)$; this $(x, y)$ lies in $\calF$ from the way we choose $\gamma_t$.
Notice that $\thetatilde^{\top}_t \thetastar - \norm{\thetatilde_t}^2 \leq \lambda^2$. 
If $\gamma_t = \gamma_t^*$, then the poisoner can output $\theta^*$ in the next step. Otherwise if $\gamma_t = \gamma_0$, the norm of $\thetatilde_t - \thetastar$ shrinks geometrically because  

\begin{equation}
\begin{split}
        \quad & \norm{\thetatilde_{t+1}-\theta^*}\\
	= \quad & \norm{\thetatilde_t - \theta^* - \eta \frac{\gamma_0(\theta^* - \thetatilde_t)}{1+\exp\left(\thetatilde^{\top}_t(\theta^* - \thetatilde_t)\gamma_0\right)}}\\
        = \quad & \norm{(\thetatilde_t - \theta^*)\left( 1 - \frac{\eta\gamma_0}{1+\exp\left(\thetatilde^{\top}_t(\theta^* - \thetatilde_t)\gamma_0\right)}\right)}\\
        \leq \quad & \norm{(\thetatilde_t - \thetastar)\left( 1 - \frac{\eta\gamma_0}{1+\exp(\lambda^2\gamma_0)}\right)} \\
        = \quad & \left( 1 - \frac{\eta\gamma_0}{1+\exp(\lambda^2\gamma_0)}\right)\norm{\thetatilde_t - \theta^*}.
\end{split}
\end{equation}
The first inequality holds because $\thetatilde^{\top}_t \thetastar - \norm{\thetatilde_t}^2 \leq \lambda^2$ and the monotonicity of exponential function. The last equality holds because $1 - \frac{\eta\gamma_0}{1+\exp(\lambda^2\gamma_0)} \geq 1 - \frac{\eta\gamma_0}{1+\exp\left(\thetatilde^{\top}_t(\theta^* - \thetatilde_t)\gamma_0\right)} \geq 0$ by our construction.
The result follows from setting $C = \frac{1}{-\log\left(1-\frac{\eta\gamma_0}{1+\exp(\lambda^2\gamma_0)}\right)}$.

Notice that if we let $\lambda=1$, the Taylor expansion of $C$ at $\gamma_0 = 0$ is approximately $2/(\eta\gamma_0)$. If we treat $R$ as a variable, then when $R$ is small, $\gamma_0 = R/\|\thetatildezero-\thetastar\|$, and therefore $C$ is approximately $2\|\thetatildezero-\thetastar\|/(\eta R)$, which is in the order of $O(1/R)$.

\subsection{Statement and Proof to Lemma~\ref{lem:equivalentf}}
\label{sec:equivalentfproof}
\begin{lemma}
\label{lem:equivalentf}
A data poisoner $DP^{\eta}(\theta_0, S, K, \calF, \thetastar, \epsilon)$ can output a model $\theta$ if and only if there exists some data poisoner $DP^{\eta}(\theta_0, S, K, \calF', \thetastar, \epsilon)$ that outputs $\theta$. 
\end{lemma}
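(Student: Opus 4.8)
The plan is to exploit a pointwise symmetry of the logistic loss and its gradient under the relabeling map $\phi:(x,y)\mapsto(yx,+1)$, and to observe that OGD interacts with an example only through this gradient. First I would record the two elementary identities, valid for every $\theta$ and every $y\in\{+1,-1\}$: $\ell(\theta,x,y)=\log(1+\exp(-y\theta^{\top}x))=\ell(\theta,yx,+1)$, and, differentiating in $\theta$, $\grad\ell(\theta,x,y)=\dfrac{-yx}{1+\exp(y\theta^{\top}x)}=\grad\ell(\theta,yx,+1)$ (using $y^2=1$). The immediate consequence is that a single OGD update $\theta_{t+1}=\theta_t-\eta\grad\ell(\theta_t,\cdot)$ performed on $(x,y)$ yields exactly the same next iterate as the update performed on $\phi(x,y)=(yx,+1)$.

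Next I would lift this per-step invariance to whole streams. Given any stream $\tilde S$ obtained from the clean stream $S$ by inserting at most $K$ examples, let $\tilde S^{\phi}$ be the stream obtained by applying $\phi$ to each inserted example while leaving the original clean examples of $S$ and all positions unchanged. A trivial induction on $t$, using the per-step invariance at each inserted position and the identity of the clean examples at all other positions, shows that the OGD iterates $\theta_0,\theta_1,\dots$ computed on $\tilde S$ and on $\tilde S^{\phi}$ coincide; in particular they have the same final model. Hence, if $DP^{\eta}(\theta_0,S,K,\calF,\thetastar,\epsilon)$ succeeds by producing $\tilde S$ whose $\le K$ inserted points lie in $\calF$ and whose OGD output $\theta$ satisfies $\norm{\theta-\thetastar}\le\epsilon$, then $\tilde S^{\phi}$ is again $S$ with $\le K$ insertions, its inserted points lie in $\calF'=\{(yx,+1):(x,y)\in\calF\}$ by the very definition of $\calF'$, and it outputs the same $\theta$ — so it is a successful $DP^{\eta}(\theta_0,S,K,\calF',\thetastar,\epsilon)$ output.

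For the converse I would run the same construction backwards: if $DP^{\eta}(\theta_0,S,K,\calF',\thetastar,\epsilon)$ produces $\tilde S'$ with inserted points $(z_i,+1)\in\calF'$, then by definition of $\calF'$ each $(z_i,+1)=\phi(x_i,y_i)$ for some $(x_i,y_i)\in\calF$; replacing each $(z_i,+1)$ by a chosen preimage $(x_i,y_i)$ (clean points and positions untouched) again leaves every OGD iterate unchanged, producing a successful poisoner against $\calF$ achieving the same $\theta$. Putting the two directions together shows that the sets of achievable models $\theta$ against $\calF$ and against $\calF'$ are identical, which is the claim.

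The only load-bearing facts are the loss and gradient identities under $\phi$; everything else is bookkeeping. Accordingly, the \emph{hard part} is merely to be careful about two points: that $\phi$ maps $\calF$ \emph{onto} $\calF'$ (immediate from $\calF'=\{(yx,+1):(x,y)\in\calF\}$, so both $\phi$ and a choice of $\phi$-preimage are available), and that the feasibility requirement in the definition of a Data Poisoner constrains only the inserted examples, not the fixed clean stream $S$ — so transforming the inserted examples never disturbs the feasibility (or the OGD contribution) of the unchanged clean portion.
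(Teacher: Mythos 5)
Your proposal is correct and follows essentially the same route as the paper's proof: the load-bearing fact in both is that $(x,y)$ and $(yx,+1)$ (equivalently $(x,-1)$ and $(-x,+1)$) induce identical logistic-loss gradient updates, so the poisoned points can be flipped between $\calF$ and $\calF'$ without changing any OGD iterate. Your write-up is in fact slightly more careful than the paper's in noting that only the inserted examples need transforming (the clean portion of the stream and its feasibility are untouched), but this is a refinement of, not a departure from, the same argument.
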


We first show that if a data poisoner can output $\theta$ using a sequence $\tilde{S} = \{(x_0, y_0), \cdots, (x_{T-1}, y_{T-1})\}$ with each point $(x_i, y_i) \in \calF$, then there must be a data poisoner that output $\theta$ using a sequence $\tilde{S}' = \{(x'_0, y'_0), \cdots, (x'_{T-1}, y'_{T-1})\}$ with each point $(x'_i, y'_i) \in \calF'$. Notice that for an OGD algorithm running on logistic regression model, $(x,+1)$ and $(-x,-1)$ will lead to the same gradient updates on the model parameter. We construct $\tilde{S}'$ as follows. For each $(x_i, y_i) \in \tilde{S}$, if $(x_i, y_i)\in\calF'$, then we let $(x'_i, y'_i) = (x_i, y_i)$; otherwise, we let $(x'_i, y'_i) = (-x_i, -y'_i)$. Notice by the construction rule of $\calF'$, at least one of $(x_i, y_i)$ and $(-x_i,-y_i)$ exists in $\calF'$. Hence, we have obtained a data poisoner on $\calF'$ using a poisoning sequence of same length and outputing the same model as a poisoner on $\calF$.

The reverse direction is similar. We can use the same technique to show that whenever a poisoner can output $\theta$ using a sequence of length $T$ with points on $\calF'$, there is a sequence using points on $\calF$ that achieves the same goal. Therefore, the statement is true. 

\subsection{Proof to Theorem~\ref{thm:generalconstraints}}
\label{sec:generalconstraintsproof}
For the first part, we use the same poisoning sequence as in the proof of Theorem~\ref{thm:boundedpoisoner}; what remains to be shown is that all $(x_t, y_t)$ in this sequence still lie in the feasible set $\calF$. To show this, we begin with showing that for any $t$, $\thetatilde_t - \theta^* = c (\thetatildezero - \theta^*)$ for some scalar $c \in [0,1]$. 

We prove this by induction. The base case is easy -- $t=0$, and $c = 1$. In the inductive case, observe that:
\[ \thetatilde_{t+1} - \theta^* = \thetatilde_t - \theta^* - \frac{\eta \gamma_t (\thetatilde_t - \theta^*)}{1 + \exp(\thetatilde_t^{\top} (\thetatilde_t - \thetastar)\gamma_t)}, \]
and that $(1 - \eta \gamma_t / (1 + \exp(\thetatilde_t^{\top} (\thetatilde_t - \thetastar)\gamma_t))) \in [0, 1]$ by construction. Thus, the inductive case follows. Now, observe that the proof of Theorem~\ref{thm:boundedpoisoner} still goes through, as $y = 1$ and $x = \gamma (\thetastar-\thetatilde_t) = c \gamma (\thetastar-\thetatildezero)$ is still a feasible point to add to the teaching sequence. 

For the second part, observe that if the data poisoner outputs $\thetastar$, then $\thetastar = \theta_0 - \eta \sum_i \ell'(\theta_{i-1}, x_i, y_i) y_i x_i$ for some $(x_i, y_i)$'s. For the augmented feasible set $\calF'$, $y_i=+1$ always. For logistic regression, $\ell' > 0$ always. Hence, there should exist some collection of $x_i$'s, and some positive scalars $\alpha_i$ such that:
\[ \sum_i \alpha_i x_i = \theta^* - \theta_0. \]
However, Lemma~\ref{lem:impossiblesum} shows that such collection is impossible by letting $\vecu = \thetastar-\thetatilde$ and $\beta=1$. Therefore, no data poisoner can output $\theta^*$.

\begin{lemma}
Let $\calG$ be a convex set, $\vecu$ be a vector and $r\vecu \notin \calG$ for all $r\geq 0$. Then there is no set of points $\{x_1, \cdots, x_n|x_i \in \calG, \forall i\in[n]\}$ such that
\[ \sum_i \alpha_i x_i = \beta \vecu \]
for any $\alpha_i, \beta \in \mathbb{R}^+$ and $n\in\mathbb{Z}^+$.
\label{lem:impossiblesum}
\end{lemma}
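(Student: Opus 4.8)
The plan is to argue by contradiction, and the whole content is to rescale the asserted positive linear combination into a \emph{convex} combination so that convexity of $\calG$ applies directly. Suppose, for contradiction, that there exist points $x_1, \dots, x_n \in \calG$ together with scalars $\alpha_1, \dots, \alpha_n, \beta \in \mathbb{R}^+$ and $n \in \mathbb{Z}^+$ such that $\sum_{i=1}^n \alpha_i x_i = \beta \vecu$.

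First I would set $\alpha := \sum_{i=1}^n \alpha_i$. Since each $\alpha_i$ is strictly positive and $n \geq 1$, we have $\alpha > 0$, so the weights $\lambda_i := \alpha_i / \alpha$ are well defined, nonnegative, and satisfy $\sum_i \lambda_i = 1$. Dividing the identity through by $\alpha$ yields $\sum_{i=1}^n \lambda_i x_i = (\beta / \alpha)\, \vecu$. Next I would invoke convexity of $\calG$: a finite convex combination of points of $\calG$ lies in $\calG$, so the left-hand side belongs to $\calG$. But the right-hand side equals $r \vecu$ with $r := \beta/\alpha$, and since $\beta, \alpha > 0$ we have $r > 0 \geq 0$; by hypothesis $r \vecu \notin \calG$ for every $r \geq 0$, a contradiction. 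Hence no such collection of points exists.

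I do not expect any genuine obstacle here. The only point to be careful about is that the hypotheses $\alpha_i, \beta \in \mathbb{R}^+$ (strictly positive, not merely nonnegative) and $n \in \mathbb{Z}^+$ are exactly what rule out the degenerate cases — an empty combination, or $\alpha = 0$ — that would otherwise need separate treatment. Everything substantive is the normalization-to-a-convex-combination step, which is precisely the form in which Theorem~\ref{thm:generalconstraints} consumes the lemma (with $\calG$ the separating convex set, $\vecu = \thetastar - \thetatilde$, and $\beta = 1$).
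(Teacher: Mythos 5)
Your proof is correct. It differs from the paper's in packaging rather than in substance, but the difference is worth noting: the paper argues by induction on $n$, assuming a positive combination of $k+1$ points of $\calG$ equals $\beta\vecu$, merging $x_1$ and $x_2$ into the single point $x' = \frac{\alpha_1}{\alpha_1+\alpha_2}x_1 + \frac{\alpha_2}{\alpha_1+\alpha_2}x_2 \in \calG$, and contradicting the inductive hypothesis for $k$ points (the base case $n=1$ being exactly the hypothesis that no positive multiple of $\vecu$ lies in $\calG$). You instead normalize the whole combination in one step by $\alpha = \sum_i \alpha_i > 0$, observe that $\sum_i (\alpha_i/\alpha) x_i$ is a finite convex combination of points of $\calG$ and hence lies in $\calG$, and contradict the hypothesis with $r = \beta/\alpha > 0$. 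These are the same underlying fact --- the paper's pairwise-merging induction is precisely an inline proof of the closure of convex sets under finite convex combinations, which you invoke as a standard fact --- but your version is shorter, avoids nesting a contradiction inside an induction, and isolates exactly where strict positivity is needed (only $\alpha > 0$ is essential for the normalization; since the hypothesis excludes $r\vecu \in \calG$ for all $r \geq 0$, even $\beta \geq 0$ would suffice). Either argument serves the use in Theorem~\ref{thm:generalconstraints} equally well.
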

\begin{proof}
We prove the statement by induction.
The base case $n=1$ is true because $\calG$ does not contain any point in the direction of $\vecu$. Suppose the statement is true for $n = k$. Assume there exists a set $S=\{x_1, x_2, \cdots, x_{k+1}\}$ such that 
\begin{equation}
\label{eq:1}
\sum_{i\in[k+1]} \alpha_i x_i = \beta\vecu.
\end{equation}
Let $x' = \frac{\alpha_1}{\alpha_1+\alpha_2}x_1 + \frac{\alpha_2}{\alpha_1+\alpha_2}x_2$. Since both $x_1$ and $x_2$ are in $\calG$ by our assumption, $x'$ is also in $\calG$ by the definition of convex set. 
Multiplying both side of the equation above by $\frac{1}{\alpha_1+\alpha_2}$, we obtain

\begin{equation}
\begin{split}
 \quad & \frac{1}{\alpha_1 + \alpha_2}\sum_{i=1}^{k+1}\alpha_ix_i\\
= \quad & \frac{\alpha_1}{\alpha_1+\alpha_2}x_1 + \frac{\alpha_2}{\alpha_1+\alpha_2}x_2 + \frac{1}{\alpha_1+\alpha_2}\sum_{i=3}^{k+1}\alpha_ix_i \\
= \quad & x' + \sum_{i=3}^{k+1}\frac{\alpha_i}{\alpha_1+\alpha_2}x_i \\
= \quad & \frac{\beta}{\alpha_1+\alpha_2}\vecu,
\end{split}
\end{equation}

The equation implies that a set $\tilde{S} = \{x'\}\bigcup S\backslash\{x_1,x_2\}$ is a set of $k$ points in $\calG$ such that a positive linear combination of them gives a vector in the direction of $\vecu$. However, this contradicts our inductive assumption when $n=k$. Therefore, there cannot be a set of $k+1$ points from $\calG$ that satisfies Equation~\ref{eq:1}. The statement we want to prove is also true for $n=k+1$. 
\end{proof}

\subsection{Various Attack Rates in the Intermediate Regime}
\label{sec:intermediatecases}
In Section~\ref{sec:analysis}, we conclude that the poisoning attack's rate can vary from impossible to rapid in the \intermediate\ regime. 
We show three examples in which the attack is very rapid, slow and impossible.

\mypara{The Rapid Case.} \SimplisticAttack\ is shown to be effective against $L_2$-norm defense. 
Let $\tilde{S} = \{(x_0,+1), \cdots, (x_{T-1},+1)\}$ be the poisoning sequence generated by \SimplisticAttack\ against $L_2$-norm defense, which only uses points with $+1$ label. Let $x_{\min}$ denote the point with the smallest $L_2$ norm among $x_0, \cdots, x_{T-1}$. A feasible set $\calF = \{(rx_{x_{\min}},+1)|r\geq 1\}$ contains all points in $\tilde{S}$ but not the origin. An attacker can use $\tilde{S}$ as the poisoning sequence to approach its target as rapidly as against $L_2$-norm defense, even though the feasible set $\calF$ does not contain the origin.

\mypara{The Slow Case.} Notice that the poisoning points in \SimplisticAttack\ normally have diminishing magnitudes towards the end. We now show that if the attacker is only allowed to use poisoning points with constant magnitude, then the attack can be very slow. 

Suppose the feasible set only contains a single point $(x,+1)$ where $x=r\frac{\thetastar-\thetatildezero}{\norm{\thetastar-\thetatildezero}}$. The attacker can only choose $\left(x, +1\right)$ as the poisoning point. The gradient update will be
$$
\theta_{t+1} = \theta_t + \frac{\eta x}{1+\exp(\theta_t^{\top}x)}
$$
for all $t$.

We consider a 1-D example with $\thetatildezero = 0.5$, $\thetastar = 1$, step size $\eta=1$ and an $L_2$-norm upperbound $r=10$. If the attacker can choose any point $(x, +1)$ such that $x \leq r = 10$, the attacker can reach its objective with a single poisoning points by solving the following equation
$$
1 = 0.5 + \frac{x}{1+\exp(0.5x)},
$$
which gives $x \approx 1.629$.

Now suppose the attacker is only allowed to choose $(r, +1)$ as the poisoning point. Let $\Delta_t$ denote the difference between $\theta_{t+1}$ and $\theta_t$. We have 
$$
\Delta_t = \frac{x}{1+\exp(\theta_t^{\top}x)} = \frac{10}{1+\exp(10\theta_t)} \leq \frac{10}{1+\exp(5)}. 
$$
The inequality is because $\theta_t \in [0.5, 1]$ as it starts from $\thetatildezero=0.5$ and approaches $\thetastar = 1$. The attacker then needs at least $\lceil \frac{0.5(1+\exp(5))}{10}\rceil = 8$ poisoning points to achieve its goal. Notice that the exponential term grows much more rapidly then the denominator. If $r=20$, the attacker will need at least $\lceil \frac{0.5(1+\exp(10))}{20}\rceil = 551$ points! In short, the attacker needs exponentially many points w.r.t. $r$ in this case to achieve the same objective that can be done using a single point against $L_2$-norm defense.

This example also naturally extends to high dimensional inputs as long as the attacker only wants to change one coordinate of the model parameter.

\mypara{The Impossible Case.}
We consider the same $\thetatildezero, \thetastar$ and $\eta$ as in the slow case. Now the attacker can only choose $(2.5, +1)$ as the poisoning point, i.e. $r=2.5$. After one update step, the model $\theta$ becomes
$$
\theta = \thetatilde+\frac{\eta x}{1+\exp(\thetatilde^{\top}x)} = 0.5+\frac{2.5}{1+\exp(1.25)} = 1.058.
$$
Notice that the model parameter monotonically increases if the attacker adds more poisoning points of $(2.5, +1)$ into the stream. Therefore, this $\theta$ after one poisoning point is the closest the attacker can ever get to the target $\thetastar = 1$. No data poisoner can output $\thetastar$ exactly. 

\subsection{Proof to Lemma~\ref{lem:l2centroid}}
\label{sec:l2centroidproof}
For simplicity, we consider a feasible set $\calF'^c = \{(yx,+1)|(x,y)\in\calF^c\}$, which is the one-sided form of $\calF^c$. As a consequence of Lemma~\ref{lem:equivalentf}, defenses characterized by $\calF^c$ and $\calF'^c$ have the same behavior. We define $\calF'^c_- = \{(-x,+1)|(x,-1)\in\calF^c_-\}$. Then $\calF'^c = \calF^c_+ \bigcup \calF'^c_-$. Also, let $L(r, \vecu)$ denote a line segment connecting the origin and a point $r\frac{\vecu}{\|\vecu\|}$ as in Theorem~\ref{thm:generalconstraints}. 

\mypara{The Rapid Case.} For the first part, we show that there exists an $r>0$ such that $L(r,\thetastar-\thetatildezero) \subseteq \calF^c_+\bigcup \calF'^c_-$.
Suppose $\norm{\mu_+} < \tau$. Let $\alpha$ denote the angle between $\mu_+$ and $\thetastar-\thetatildezero$, and let $x = l(\thetastar-\thetatildezero)/\norm{\thetastar-\thetatildezero}$ be a point in the direction of $\thetastar-\thetatildezero$ with norm $l$. A point $(x,+1)$ is in $\calF^c_+$ if and only if it satisfies the following inequality
$$
\norm{x-\mu_+}^2 = l^2 - 2l\norm{\mu_+}\cos\alpha + \norm{\mu_+}^2 \leq \tau^2.
$$
We want to find the range for $l$ when the inequality holds. Notice that the determinant of this quadratic inequality, $\norm{\mu_+}^2(\cos\alpha)^2-\norm{\mu_+}^2+\tau^2$, is always positive. Therefore, real solutions always exist for the lower and upper bound of $l$.
Solving the inequality gives us
$$
l \geq \norm{\mu_+}\cos\alpha - \sqrt{\norm{\mu_+}^2(\cos\alpha)^2 - \norm{\mu_+}^2 + \tau^2}
$$
and
$$
l \leq \norm{\mu_+}\cos\alpha + \sqrt{\norm{\mu_+}^2(\cos\alpha)^2 - \norm{\mu_+}^2 + \tau^2}.
$$
 
It is also easy to verify that the lower bound is smaller than $0$ and the upper bound is larger than $0$. Let $r = \norm{\mu_+}\cos\alpha + \sqrt{\norm{\mu_+}^2(\cos\alpha)^2 - \norm{\mu_+}^2 + \tau^2}$. The solution implies that $L(r,\thetastar-\thetatildezero) \subseteq \calF^c_+$.

Similarly, when $\norm{\mu_-} < \tau$, we can show that there exists an $r$ such that $L(r,\thetastar-\thetatildezero)$ is in $\calF'^c_-$. Therefore, when $\tau > \min(\norm{\mu_+}, \norm{\mu_-})$, there exists some $r>0$ s.t. $L(r,\thetastar-\thetatildezero) \subseteq \calF'^c$. The rest follows from Theorem~\ref{thm:generalconstraints}.

\mypara{The Impossible Case.} 
For the second part, we show that $\calG = \{(x,+1)|(\thetastar-\theta_0)^{\top}x \leq 0\}$, i.e. the negative halfspace characterized by $(\thetastar-\theta_0)$, contains both $\calF^c_+$ and $\calF'^c_-$ yet does not intersect $L(+\infty, \thetastar-\theta_0)$ at all. The rest follows from Theorem~\ref{thm:generalconstraints}.

We first look at $\calF^c_+$. The condition $\langle\mu_+, \thetastar-\theta_0\rangle <0$ implies that $\mu_+ \in \calG$. The distance between $\mu_+$ to the hyperplane $(\thetastar-\theta_0)^{\top}x=0$ is $\norm{u_+}$. Since $\norm{u_+} \geq \tau$, none of the point in $\calF^c_+$ can cross the hyperplane, and therefore $\calF^c_+ \subseteq \calG$. The proof for showing $\calF'^c_-$ in $\calG$ is analogous. Therefore $\calG$ contains both $\calF^c_+$ and $\calF'^c_-$. In addition, $\calG$ does not contain any point $x$ in $L(+\infty, \thetastar-\theta_0)$ because $x = r(\thetastar-\theta_0)/\|\thetastar-\theta_0\|$ for some $r>0$, and $(\thetastar-\theta_0)^{\top}x = (\thetastar-\theta_0)^{\top}(r(\thetastar-\theta_0)/\|\thetastar-\theta_0\|) = r > 0$.

\subsection{Proof to Lemma~\ref{lem:slab}}
\label{sec:slabproof}
For simplicity, we consider a feasible set $\calF'^s = \{(yx,+1)|(x,y)\in\calF^s\}$, which is the one-sided form of $\calF^s$. As a consequence of Lemma~\ref{lem:equivalentf}, defenses characterized by $\calF^s$ and $\calF'^s$ have the same behavior. We define $\calF'^s_- = \{(-x,+1)|(x,-1)\in\calF^s_-\}$. Then $\calF'^s = \calF^s_+ \bigcup \calF'^s_-$.

\mypara{The Rapid Case.}
For the first part, we show that $L((\tau-b_+)/\norm{\beta}, \thetastar-\thetatildezero)$ is in $\calF^s_{+}$ or $L((\tau-b_-)/\norm{\beta}, \thetastar-\thetatildezero)$ is in $\calF'^s_{-}$.

Suppose $\tau-b_+>0>-\tau-b_+$. Let $x$ be a point in $L((\tau-b_+)/\norm{\beta}, \thetastar-\thetatildezero)$. Then $x$ can be expressed as $x=r(\thetastar-\thetatilde)/\norm{\thetastar-\thetatilde}$ for some $0<r<(\tau-b_+)/\norm{\beta}$. We know that 
$$
\beta^{\top}x \leq \norm{\beta}r \leq \norm{\beta}(\tau-b_+)/\norm{\beta} = \tau-b_+,
$$
therefore $\beta^{\top}x + b_+ \leq \tau$. On the other hand, $\beta^{\top}x > 0$ by our assumption that $\beta^{\top}(\thetastar-\thetatildezero) > 0$, and $b_+>-\tau$ by the condition $-\tau-b_+<0$. Therefore, $\beta^{\top}x+b_+ > b_+ > -\tau$. We can conclude that the Slab score $|\beta^{\top}x|\leq \tau$, and hence $L((\tau-b_+)/\norm{\beta}, \thetastar-\thetatildezero) \in \calF^s_{+}$.

Similarly, we can show that $L((\tau-b_-)/\norm{\beta}, \thetastar-\thetatildezero)$ is in $\calF'^s_{-}$ when $\tau-b_->0>-\tau-b_-$. 
The rest follows from Theorem~\ref{thm:generalconstraints}, and an example of such a rapid data poisoner is \SimplisticAttack$(\theta_0, S, K, \calF, \thetastar, \epsilon, r)$ in which $\calF = \calF'^s$ and $r = \min((\tau-b_+)/\norm{\beta}, (\tau-b_-)/\norm{\beta})$.

\mypara{The Impossible Case.}
For the second part, we show that $\calG = \{(x,+1)|\beta^{\top}x \leq 0\}$, i.e. the negative halfspace characterized by $\beta$, contains both $\calF^s_+$ and $\calF'^s_-$ yet does not intersect $L(+\infty, \thetastar-\thetatildezero)$ at all. The rest follows from Theorem~\ref{thm:generalconstraints}.

We first look at $\calF^s_+$ when $0 > \tau-b_+ > -\tau-b_+$. 
We know that the Slab score $|\beta^{\top}x+b_+|\leq \tau$ for all $x\in\calF^s_+$. Combining with the condition that $\tau-b_+ < 0$, we have $\beta^{\top}x \leq \tau-b_+ < 0$ for all $x\in\calF^s_+$. Therefore, $\calF^s_+ \subseteq \calG$. The proof for showing $\calF'^s_{-}$ in $\calG$ is analogous. Also, since $\beta^{\top}(\thetastar-\thetatildezero) >0$, we have $\beta^{\top}x > 0$ for all $x\in L(+\infty, \thetastar-\thetatildezero)$. Therefore, $\calG$ does not intersect $L(+\infty, \thetastar-\thetatildezero)$ at all. Theorem~\ref{thm:generalconstraints} tells that no data poisoner can output $\thetastar$ starting from $\thetatildezero$.

If $\beta^{\top}(\thetastar-\theta_0) > 0$, then we can replace $\thetatildezero$ in the above analysis with $\theta_0$, and show that $\calG$ does not intersect $L(+\infty, \thetastar-\theta_0)$ at all. Theorem~\ref{thm:generalconstraints} then tells that no data poisoner can output $\thetastar$.

\subsection{Proof to Lemma~\ref{lem:fullyonlinehard}}
In this problem setting, a linear classifier with classification rule $f(x) = \sgn(\theta^{\top}x)$ is correct on clean examples if $\theta>0$, and is incorrect otherwise.
We show that the online learner's model $\theta_t$ is always positive (except at $t=0$) even in the presence of the attacker, and thus the online model has no classification error on the clean examples for $t>0$.

The update rule of a logistic regression classifier at each iteration $t$ is
$$
\theta_{t+1} = \theta_t + \eta \frac{y_tx_t}{1+\exp(\theta_t^{\top}x_ty_t)}. 
$$ 
For simplicity, we assume $y_t = 1$ for all $t$ because $(x_t,y_t)$ and $(-x_t,-y_t)$ gives the same updates. 
Therefore, the clean example will be $(1, +1)$ and the only feasible poisoning example is $(-1, +1)$.
Substitute $\eta=1$ into the equation, the rule becomes
$$
\theta_{t+1} = \theta_t + \frac{x_t}{1+\exp(\theta_t^{\top}x_t)}. 
$$

Now, let $S$ be any poisoned stream created by the attacker. We use an \emph{interval} to denote a consecutive sequence of data in $S$,
and extract \emph{intervals of interests} using the following rules.

An interval of interest starts at time $t$ if 
1) $\theta_t \geq 0$ and
2) $\theta_{t+1} \leq 0$.
It ends at the first time $t' > t$ when 1) $\theta_{t'} \leq 0$ and 2) $\theta_{t} > 0$.
It also ends if $t'=T$, i.e. we already reach the end the stream.

Notice that by our rule, we can extract multiple intervals of interests from $S$, 
and the intervals will never overlap.

We observe the following two major properties of the intervals.

\mypara{Observation 1.}
Clean examples outside the intervals will all be correctly classified.
\footnote{There exists one corner case: the first point in $S$ is clean and since $\theta_{0}=0$, it has the chance to be misclassified. When this case happen, the error bound on clean examples needs to add $1$, which does not impact the result when the stream length $T$ is large.}

\mypara{Observation 2.}
There are at least as many poisoning points as clean points in each interval.
 
The reason for Observation 1 is straight forward as $\theta_t > 0$ for all $t$ not in the interval.

In order to prove Observation 2, we first show a few useful claims.
\begin{claim}
\label{claim:f1}
Suppose the model $\theta_t \geq 0$ at time $t$, and updates over a poisoning point so that $\theta_{t+1} < 0$.
Then at any $t'$ such that $\theta_{t+1}\leq \theta_{t'}\leq 0$, a single model update over a clean point will always cause $\theta_{t'+1}>0$. 
\end{claim}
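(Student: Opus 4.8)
The plan is to follow the setup in the proof of Lemma~\ref{lem:fullyonlinehard}: after the WLOG normalization $y_t\equiv 1$, the clean example is $(1,+1)$, and with $\eta=1$ the OGD update becomes $\theta\mapsto\theta+\bigl(1-\sigma(\theta)\bigr)$ on a clean point and $\theta\mapsto\theta-a\,\sigma(a\theta)$ on a poisoning point $(-a,+1)$, where $\sigma(z)=1/(1+e^{-z})$ is the logistic function (using $1/(1+e^{\theta})=1-\sigma(\theta)$). Since a poisoning point with non-negative first coordinate would only increase $\theta_t$, the hypothesis $\theta_t\ge 0$, $\theta_{t+1}<0$ forces the poisoning point to have the form $(-a,+1)$ with $a\in(0,1]$, so it is safe to restrict attention to this case.

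The first key step is a uniform lower bound $\theta_{t+1}\ge-\tfrac12$. Consider $h(\theta)=\theta-a\,\sigma(a\theta)$, the map applied at the poisoning step. Its derivative is $h'(\theta)=1-a^2\sigma(a\theta)\bigl(1-\sigma(a\theta)\bigr)\ge 1-\tfrac{a^2}{4}\ge\tfrac34>0$, so $h$ is strictly increasing; since $\theta_t\ge0$ this yields $\theta_{t+1}=h(\theta_t)\ge h(0)=-\tfrac a2\ge-\tfrac12$. Combined with $\theta_{t+1}<0$, every index $t'$ with $\theta_{t+1}\le\theta_{t'}\le0$ then satisfies $\theta_{t'}\in[-\tfrac12,0]$.

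The second key step handles the clean update from such a $\theta_{t'}$. Let $g(\theta)=\theta+\bigl(1-\sigma(\theta)\bigr)$; then $g'(\theta)=1-\sigma(\theta)\bigl(1-\sigma(\theta)\bigr)\ge\tfrac34>0$, so $g$ is increasing on $[-\tfrac12,0]$ and it suffices to verify $g(-\tfrac12)>0$. But $g(-\tfrac12)=\tfrac12-\sigma(-\tfrac12)=\tfrac12-\tfrac1{1+\sqrt e}$, and $\sqrt e>1$ gives $\tfrac1{1+\sqrt e}<\tfrac12$, hence $g(-\tfrac12)>0$. Therefore $\theta_{t'+1}=g(\theta_{t'})\ge g(-\tfrac12)>0$, which is exactly the claim.

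I do not anticipate a real obstacle here. The only point requiring care is that the crude estimate $\theta_{t+1}>-1$ (from the fact that the poisoning step has magnitude less than $1$) is not strong enough; one must instead invoke the monotonicity of $h$ to obtain the sharper bound $\theta_{t+1}\ge-\tfrac12$, after which the elementary inequality $1/(1+\sqrt e)<1/2$ finishes the argument.
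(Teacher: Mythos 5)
Your proof is correct and takes essentially the same route as the paper: both arguments first establish the sharper bound $\theta_{t+1}\ge -\tfrac12$ by minimizing the poisoning-step map over $\theta_t\ge 0$ (the paper minimizes $\theta_t - \tfrac{1}{1+\exp(-\theta_t)}$ directly, you make the monotonicity explicit via the derivative), and then verify that a clean update from any $\theta_{t'}\in[-\tfrac12,0]$ lands strictly above zero, which reduces to $\tfrac{1}{1+\sqrt{e}}<\tfrac12$. Your only addition is treating general poisoning points $(-a,+1)$ with $a\in(0,1]$ rather than just the worst case $a=1$ that the paper assumes, which is a minor refinement rather than a different approach.
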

\begin{proof}
Deriving from the model update rule, $\theta_{t+1}$ is at least $-0.5$.
This is done by minimizing $\theta_{t} + \frac{-1}{1+\exp(-\theta_t)}$ for $\theta_t \geq 0$.
We again can derive that a single update over a clean point always makes $\theta_{t'+1}>0$
from the model update rule, as $\theta_{t}+\frac{1}{1+\exp(\theta_t)} > 0$ for all $\theta_t\in[-0.5,0]$.
\end{proof}

\begin{claim}
\label{claim:f2}
Suppose $-0.5\leq \theta_t \leq 0$, and in the following $n$ points, there are more clean points than poisoning points.
Then there must be some $t'\in\{t+1,\cdots,t+n\}$ such that $\theta_{t'} > 0$.  
\end{claim}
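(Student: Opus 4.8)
The plan is to argue by contradiction: suppose $\theta_{t'}\le 0$ for every $t'\in\{t+1,\dots,t+n\}$, and derive a contradiction with the hypothesis that among the $n$ points processed from state $\theta_t$ there are strictly more clean points than poisoning points. Using the normalization already set up in the proof of Lemma~\ref{lem:fullyonlinehard} ($\eta=1$, all labels taken to be $+1$, clean point $(1,+1)$, feasible poisoning point $(-1,+1)$), write $g(\theta)=1/(1+e^{\theta})$, so that a clean update is $\theta\mapsto\theta+g(\theta)$ and a poisoning update is $\theta\mapsto\theta+g(\theta)-1$.

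First I would isolate the one elementary fact that drives the argument: $g(\theta)\ge 1/2$ whenever $\theta\le 0$, with strict inequality when $\theta<0$. Under the contradiction hypothesis, together with $\theta_t\le 0$ from the statement, all of $\theta_t,\theta_{t+1},\dots,\theta_{t+n}$ are $\le 0$; hence at each step $s\in\{t,\dots,t+n-1\}$ a clean update increases $\theta$ by $g(\theta_s)\ge 1/2$ and a poisoning update changes $\theta$ by $g(\theta_s)-1\ge-1/2$. Telescoping over the window gives $\theta_{t+n}-\theta_t\ge\tfrac12(c-p)$, where $c,p$ are the numbers of clean and poisoning points among the $n$ steps. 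Since $c>p$, i.e.\ $c\ge p+1$, and $\theta_t\ge-1/2$, this already yields $\theta_{t+n}\ge\theta_t+\tfrac12\ge 0$.

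The remaining work is to upgrade this to the strict inequality $\theta_{t+n}>0$, which contradicts $\theta_{t+n}\le 0$ and finishes the proof. If $c\ge p+2$ then $\theta_{t+n}\ge\theta_t+1\ge\tfrac12>0$. If $c=p+1$ and $\theta_t>-1/2$ then $\theta_{t+n}\ge\theta_t+\tfrac12>0$. In the lone boundary case $c=p+1$ and $\theta_t=-1/2$, the step at time $t$ cannot be tight in the bounds above: since $\theta_t=-1/2\ne 0$ we have $g(\theta_t)>1/2$, so that step alone contributes strictly more than its generic bound (namely $g(\theta_t)$ rather than $1/2$ if it is clean, $g(\theta_t)-1$ rather than $-1/2$ if it is poisoning), whence $\theta_{t+n}>\theta_t+\tfrac12=0$. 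Either way we reach a contradiction, so some $\theta_{t'}$ with $t'\in\{t+1,\dots,t+n\}$ is positive.

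The main obstacle is conceptual rather than computational: the invariant $\theta_s\in[-1/2,0]$ is \emph{not} preserved by poisoning updates (a poisoning step starting from $-1/2$ overshoots strictly below $-1/2$, and a run of poisoning points can push $\theta$ arbitrarily far below zero), so one cannot simply induct on a fixed small interval containing $\theta$. The telescoping/potential argument sidesteps this by using only the one-sided bound $g(\theta_s)\ge 1/2$, which holds for \emph{all} $\theta_s\le 0$ regardless of how far below zero they drift. The boundary value $\theta_t=-1/2$ is the single spot where a weak inequality must be promoted to a strict one, and strict monotonicity of $g$ away from $0$ handles it in one line.
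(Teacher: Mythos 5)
Your proof is correct and follows essentially the same route as the paper's: assume all $\theta_{t'}\le 0$ on the window, bound each clean update below by $1/2$ and each poisoning update below by $-1/2$ (valid for any $\theta_s\le 0$), telescope to get $\theta_{t+n}\ge\theta_t+\tfrac12(c-p)$, and use strictness of the update at $\theta_t<0$ to settle the boundary case $\theta_t=-1/2$. Your treatment of the strict inequality is just a more explicit version of the paper's terse ``strictly more than $0.5$ when $\theta_t=-0.5$'' remark.
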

\begin{proof}
We prove by contradiction. Suppose for all $t'\in\{t+1,\cdots,t+n\}$, $\theta_{t'}\leq 0$. 
Then an update over a poisoning point will lower $\theta$ by at most $0.5$,
while an update over a clean will raise $\theta$ by at least $0.5$.
Since there are more clean points than poisoning points, then the accumulated change in $\theta$
must be at least $0.5$ and strictly more than $0.5$ when $\theta_{t}=-0.5$. Therefore, $\theta_{t+n} > 0$. Contradiction. 
\end{proof}

We then prove Observation 2 by two cases as follows.
 
\mypara{Case 1.} In each interval of interest that ends before time $T$, the first point at the starting time $t$ is always a poisoning point, because $\theta_{t+1}<\theta_t$, i.e. the point moves the model into the wrong direction.
Similarly, the last point at time $t'$ will always be a clean point, because it moves the model in the right direction.
Using the same argument in Claim~\ref{claim:f1}, model $\theta_{t+1}$ is at least $-0.5$. Then by Claim~\ref{claim:f2}, the remaining points cannot contain more clean points than poisoning points,
because otherwise the model $\theta$ will be positive somewhere in the interval, contradicting to the construction of the interval. Therefore, the number of poisoning points at least equal to the number of clean points in the interval.

\mypara{Case 2.} In the interval of interest that ends at time $T$ if such interval exists, the first point at the starting time $t$ is also a poisoning point due to the same argument for the previous case. Once again by Claim~\ref{claim:f2}, the interval must contain no more clean points than poisoning points, otherwise the model $\theta$ will be positive at some time within the interval.

Now combining Observation 1 and 2, we know that the clean points are only classified wrongly in the interval of interests, and the number of such clean points no more than the poisoning points in the intervals. Therefore, no attacker can cause more online error on clean points than the number of poisoning points it can add.

The bound is also tight as the attacker can construct an attack such that the number of error equals to the number of poisoning points. The attack can be done by inserting one poisoning point after every clean point, starting at the beginning of the stream.

\subsection{Proof to Lemma~\ref{lem:fullyonlineeasy}}
We construct a fully-online attacker which adds \emph{one} poisoning example after every ten clean examples, and show that such attacker can achieve the attack effectiveness stated in the lemma.

The main idea of the proof is to show that for most time $t$, $\theta_t$ has very few nonnegative coordinates and thus low classification accuracy on clean examples.
In order to show this, we look at the sum of model parameters over all coordinates, and construct an attacker such that 1) a poisoning example can lower the sum much more rapidly than a clean example increases it when $\theta_t$ has many nonnegative coordinates, and 2) the sum is also lower bounded. If the number of nonnegative coordinates in $\theta_t$ is above a threshold for too many different $t$, then the lower bound will be violated and thus causes contradiction.  

Let $\theta_{t,i}$ denote the $i$-th coordinate of the online model $\theta_t$ at iteration $t$. In this problem setting, the classifier predicts a clean example $(e_i, +1)$ or $(-e_i,-1)$ correctly if $\theta_{t,i} > 0$ and makes an error otherwise. 

The attacker is as follows. At time $t$ when the learner gets a poisoning examples from the attacker, the attacker first picks $1000$ nonnegative coordinates of $\theta_t$. Let $J = \{j_1, \cdots, j_{1000}\}$ denote the indices of these $1000$ coordinates. The poisoning point $x$ has $x_j = -\frac{1}{\sqrt{1000}}$ for $j\in J$ and $x_j = 0$ for other $j\in\mathbb{Z}^+, j\leq 10000$. The label $y = +1$. If fewer than $1000$ coordinates are nonnegative, then the attacker returns a zero vector as the poisoning example, which causes no change to the online model.

We say the online learner makes one cycle of updates when it updates on ten clean examples followed by a poisoning examples, which is the pattern of poisoning locations in the setting. Then 1,000,000 clean examples corresponds to 100,000 complete cycles.

In order to prove the lemma, we first look at the number of nonnegative coordinates in $\theta_t$ when $t=11k+10$ for $k \in \mathbb{Z}^+$, i.e. the time steps when the learner gets an example from the attacker.  
\begin{claim}
\label{claim1}
The number of nonnegative coordinates in $\theta_{11k+10}$ must be below $1000$ in at least 60,000 cycles, i.e. for 60,000 different $k$.  
\end{claim}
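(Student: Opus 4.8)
The plan is to argue by contradiction using a potential function: let $\Phi_t = \sum_{i=1}^{d} \theta_{t,i}$ be the sum of all coordinates of the online model. I would track how $\Phi_t$ evolves over one cycle (ten clean updates followed by one poisoning update) and derive two facts: first, $\Phi_t$ is bounded below by a fixed constant for all $t$; second, whenever $\theta_{11k+10}$ has at least $1000$ nonnegative coordinates, the poisoning update in that cycle decreases $\Phi$ by a substantial amount, which a single cycle's worth of ten clean updates cannot compensate. If the ``bad event'' (at least $1000$ nonnegative coordinates at the poisoning step) occurred in more than $40{,}000$ of the $100{,}000$ cycles, the net downward drift would eventually push $\Phi$ below its lower bound, a contradiction; hence the bad event occurs in at most $40{,}000$ cycles, i.e. the number of nonnegative-coordinate-deficient cycles is at least $60{,}000$.

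The key steps, in order: (1) Establish the lower bound on $\Phi_t$. A clean update on $(\pm e_i, \pm 1)$ changes only coordinate $i$ by $\pm \frac{1}{1+\exp(\theta_{t,i}(\cdot))}$, a quantity in $(0,1)$, and in fact drives $\theta_{t,i}$ toward the positive side; a standard argument (like Claim~\ref{claim:f1}/Claim~\ref{claim:f2} in the one-dimensional case) shows each coordinate stays above some small negative constant once it has been touched, and coordinates never touched stay at $0$. The poisoning update only affects the $1000$ chosen (nonnegative) coordinates, each decreased by $\frac{1}{\sqrt{1000}}\cdot\frac{1}{1+\exp(-\sqrt{1000}\cdot\theta\text{-term})}$; since those coordinates were nonnegative, the relevant logistic factor is at most $1/2$, so each drops by at most $\frac{1}{2\sqrt{1000}}$, bounding how far below zero any coordinate can be pulled and hence bounding $\Phi_t$ from below by roughly $-\sqrt{1000}/2$ (a constant independent of $t$). (2) Bound the per-cycle increase from clean updates: ten clean updates increase $\Phi$ by at most $10$ (each logistic gradient magnitude is $<1$). (3) Bound the per-cycle decrease from a poisoning update when the bad event holds: the $1000$ chosen coordinates are nonnegative, so the logistic factor on the poisoning example is close to $1/2$ when the coordinates are near zero and cannot be much smaller; more carefully, since $\theta^\top x = -\frac{1}{\sqrt{1000}}\sum_{j\in J}\theta_{t,j} \le 0$, the factor $\frac{1}{1+\exp(\theta^\top x)} \ge \frac12$, so $\Phi$ decreases by at least $1000 \cdot \frac{1}{\sqrt{1000}} \cdot \frac12 = \frac{\sqrt{1000}}{2}$ in that cycle. (4) Combine: a bad cycle nets a decrease of at least $\frac{\sqrt{1000}}{2} - 10 > 5$; if there were more than $40{,}000$ bad cycles, the total decrease exceeds $200{,}000$, far more than the finite gap between $\Phi_0 = 0$ and the lower bound, contradiction. (5) Conclude at least $60{,}000$ cycles are not bad.

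The main obstacle I expect is being careful about the accounting in step (3) and (1) simultaneously: the lower bound on $\Phi$ and the per-bad-cycle decrease both hinge on controlling how negative individual coordinates can get, and on the fact that a coordinate, once it is nonnegative and gets selected for poisoning, is driven only slightly negative and then bounces back on its next clean visit. One has to make sure that the same coordinate isn't repeatedly driven down without recovery — but since clean examples are uniform over all $2d$ directions and $d = 10{,}000$ is large while there are only $100{,}000$ cycles, a high-probability counting argument (the ``with probability more than $0.99$'' in the lemma) ensures each coordinate is visited by clean examples often enough relative to how often it is poisoned. I would handle this with a simple union bound / concentration estimate on the number of clean visits per coordinate, which is where the probabilistic qualifier in the lemma statement enters. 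The logistic-gradient estimates themselves are routine monotonicity bounds and I would not belabor them.
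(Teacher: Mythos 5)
Your overall strategy is the paper's: the potential $\Phi_t=\sum_i\theta_{t,i}$, a uniform lower bound on it, a large drop of about $\sqrt{1000}/2$ at each ``bad'' poisoning step, and a counting contradiction if more than $40{,}000$ cycles are bad. However, your step (4) has a genuine quantitative gap: you tally only the net decrease of the bad cycles and conclude ``total decrease exceeds $200{,}000$, contradiction,'' ignoring that the roughly $60{,}000$ good cycles each still contain ten clean updates that can \emph{increase} $\Phi$ (by your own bound, up to $10$ per cycle, i.e.\ up to $600{,}000$ in total), so no contradiction with the lower bound follows. Moreover, the loose per-clean-update bound ``$<1$'' that you use is not good enough even with correct global bookkeeping: with $B>40{,}000$ bad cycles the poisoning decrease is about $15.8\,B\approx 6.3\times 10^5$, while $10^6$ clean updates at increase $\le 1$ contribute up to $10^6$, which swamps it. The paper closes exactly this gap by first proving the deterministic coordinate bound $\theta_{t,i}\ge -1/\sqrt{1000}$ (a coordinate only decreases when it is nonnegative and poisoned, by at most $1/\sqrt{1000}$ per step), which forces $\theta_t^{\top}x_ty_t\ge -1/\sqrt{1000}$ on clean updates and hence $s_{t+1}-s_t\le 1/(1+\exp(-1/\sqrt{1000}))\le 0.52$ per clean update; then the global account $40{,}000\cdot(-\sqrt{1000}/2)+10^6\cdot 0.52\le -10^5$ contradicts $s_T\ge 10{,}000\cdot(-1/\sqrt{1000})\ge -400$. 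So you need both the sharper $0.52$ bound (derived from the coordinate lower bound you sketch in step (1)) and accounting that includes all clean updates, not just those inside bad cycles; also note the per-coordinate poisoning drop is at most $1/\sqrt{1000}$ (not $1/(2\sqrt{1000})$), and the resulting lower bound on $\Phi$ is about $-10{,}000/\sqrt{1000}\approx -316$, not $-\sqrt{1000}/2$.

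A second, smaller misconception: this claim is entirely deterministic, so your final paragraph's union-bound/concentration argument about how often each coordinate is revisited by clean examples is neither needed nor the mechanism that keeps coordinates bounded below; that boundedness is the deterministic observation above and holds for every realization of the stream. The ``with probability more than $0.99$'' in the lemma enters only later, in the Hoeffding step bounding how many of the clean examples falling in the low-nonnegative-coordinate time steps happen to be classified correctly.
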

\begin{proof}
We prove the contrapositive statement by contradiction.
Let $s_t$ denote the sum of all coordinates of $\theta_t$, i.e. $s_t = \sum_{j=1}^{10000} \theta_{t,j}$, and $\theta_{t,i}$ denote the $i$-th coordinate of the online model $\theta_t$ at iteration $t$. We have the following observations on $\theta_{t,i}$ and $s_t$.

\mypara{Observation 1.} We have $\theta_{t,i} \geq -\frac{1}{\sqrt{1000}}$ for all $t$ and $i$. This is because $\theta_{t,i}$ decreases only when the learner updates over a poisoning example and the coordinate is nonnegative, and the poisoning example can lower $\theta_{t,i}$ by at most $\frac{1}{\sqrt{1000}}$ in one update.

\mypara{Observation 2.} For each update over the nonzero poisoning example, $s_{t+1} - s_t \leq -\sqrt{1000}/2$. This is because for each coordinate $j \in J$, $\theta_{t+1,j} - \theta_{t,j} \leq -1/(2\sqrt{1000})$ by our construction of the attacker, and there are a total of $1000$ coordinates to be updated.
 
Suppose that in more than 40,000 cycles, i.e. 40,000 different $k$, the number of nonnegative coordiates in $\theta_{11k+10}$ is more that $1000$.
Then for at least 40,000 different $t$, $s_{t+1}-s_t \leq -\sqrt{1000}/2$. 
On the other hand, when the learner updates over a clean example, $s_{t+1}-s_t\leq 1/(1+\exp(\theta_t^{\top}x_ty_t)) \leq 0.52$ because $\theta_{t,i}\geq -\frac{1}{\sqrt{1000}}$ for all $i, t$ and $\theta_t^{\top}x_ty_t \geq -\frac{1}{\sqrt{1000}}$.
There are a total of 1,000,000 clean examples.
We can get an upper bound of $s_T$ as follows.
\begin{align*}
s_{T} = s_{T} - s_{0} & =  \sum_{t=0}^{T-1} s_{t+1}-s_t\\
& \leq  40000(-\sqrt{1000}/2) + 1000000(0.52)\\
& \leq -10^5. 
\end{align*}

However, by Observation 1, $\theta_{T,i} \geq -\frac{1}{\sqrt{1000}}$ for all $i$, 
$$
s_T \geq 10000(-\frac{1}{\sqrt{1000}}) \geq -400,
$$
which contradicts the lower bound. 
Hence, it is impossible that $\theta_{11k+10}$ has at least $1000$ nonnegative coordinates for more than 40,000 cycles, and the statement is proven. 
\end{proof}

\begin{claim}
\label{claim2}
At least 600,000 clean examples are classified by models with fewer than 1000 nonnegative coordinates.
\end{claim}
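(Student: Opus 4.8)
The plan is to combine Claim~\ref{claim1} with a simple monotonicity property of the number of nonnegative coordinates during the clean-update portion of each cycle. Recall the poisoning pattern: in cycle $k$ the learner processes ten clean examples at times $11k, 11k+1, \ldots, 11k+9$ and then one poisoning example at time $11k+10$, so $\theta_{11k+10}$ is precisely the model obtained after the ten clean updates of cycle $k$ (and before that cycle's poisoning update). The clean example presented at time $11k+j$ is classified by the model $\theta_{11k+j}$, for $j = 0, \ldots, 9$. Since there are $100{,}000$ cycles, this accounts for all $1{,}000{,}000$ clean examples.

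First I would observe that a clean update touches only a single coordinate and can never turn a nonnegative coordinate negative. A clean example is $(\pm e_i, \pm 1)$, so $y_t x_t = e_i$ and the OGD update reads $\theta_{t+1} = \theta_t + \frac{e_i}{1+\exp(\theta_{t,i})}$, which strictly increases $\theta_{t,i}$ and leaves every other coordinate unchanged. Hence, as $t$ ranges over the ten clean steps $11k, 11k+1, \ldots, 11k+10$ of a single cycle, the number of nonnegative coordinates of $\theta_t$ is nondecreasing. In particular, if $\theta_{11k+10}$ has fewer than $1000$ nonnegative coordinates, then so do $\theta_{11k}, \theta_{11k+1}, \ldots, \theta_{11k+9}$; that is, all ten clean examples of cycle $k$ are classified by models with fewer than $1000$ nonnegative coordinates.

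Next I would invoke Claim~\ref{claim1}, which guarantees that $\theta_{11k+10}$ has fewer than $1000$ nonnegative coordinates for at least $60{,}000$ of the $100{,}000$ cycles. Each such cycle contributes ten clean examples classified by models with fewer than $1000$ nonnegative coordinates, so summing over these cycles yields at least $600{,}000$ such clean examples, which is the assertion of the claim.

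The argument is essentially bookkeeping, and the only place requiring care is the indexing: one must confirm that $\theta_{11k+10}$ is the model reached \emph{after} all ten clean updates of cycle $k$, so that the nondecreasing-count observation genuinely upper-bounds the counts at the times the ten clean examples of that cycle are actually classified. The remaining ingredient — that a clean update cannot decrease the number of nonnegative coordinates — is immediate from the update rule.
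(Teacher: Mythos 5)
Your proof is correct and follows essentially the same route as the paper's: invoke Claim~\ref{claim1}, note that clean updates never decrease the number of nonnegative coordinates within a cycle so the ten models $\theta_{11k},\ldots,\theta_{11k+9}$ inherit the bound from $\theta_{11k+10}$, and multiply $60{,}000$ cycles by ten clean examples each. The extra detail you give on why a clean update only (strictly) increases a single coordinate is a fine elaboration of the monotonicity fact the paper asserts directly.
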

\begin{proof}
Claim~\ref{claim1} shows that $\theta_{11k+10}$ has fewer than 1000 nonnegative coordinates in over 60,000 cycles.
Notice that in each cycle, the number of nonnegative coordinates of $\theta_t$ never decreases when model updates on clean examples.
Therefore, $\theta_{11k+i}$ must have fewer than 1000 nonnegative coordinates too for $i=0,\cdots,9$, i.e. the time steps when the model updates over clean examples.
There are 10 clean examples in each cycle, and hence the statement is proven.
\end{proof}

The last step is to show that with high probability, at least 500,000 out of the 600,000 examples in Claim~\ref{claim2} are misclassified using concentration bound. 
Let $J'$ be the set of time step indices at which the online model has fewer than 1000 nonnegative coordinates. By Claim~\ref{claim2}, $|J'|\geq 600,000$. 
Let $X_t = \mathds{1}(\mbox{$\theta_t$ correctly predicts the clean example $(x_t, y_t)$})$. 
For $t\in J'$, $\mathbb{E}(X_t) < 0.1$ because 1) the input $x_t$ only has one nonzero coordinate, 2) the prediction is only correct when the nonzero coordinate of $x_t$ corresponds to a nonnegative coordinate of $\theta_t$, and 3) $x_t$ is independent of $\theta_t$. In addition, $X_t$ is bounded in $[0, 1]$. 
Let $S = \sum_{t\in J'}X_t$ be the random variable of total number of correct prediction for time steps in $J'$.
Notice that 
$$\mathbb{E}[S] = \sum_{t\in J'}\mathbb{E}[X_t] < 60000$$.
Applying Hoefdding's inequality, we have   
\begin{equation}
\begin{split}
& \quad \Pr[S > 100000]\\
= & \quad \Pr[S-\mathbb{E}[S] > 100000-\mathbb{E}[S]]\\
\leq & \quad \Pr[S-\mathbb{E}[S] > 40000] \\ 
\leq & \quad \exp\left(-\frac{2(40000)^2}{|J'|}\right)\\
\leq & \quad \exp\left(-\frac{2(40000)^2}{600000}\right) < 0.01. 
\end{split}
\end{equation}
The probability that more than 100,000 examples at times steps $t\in J'$ is classified correctly is less than $0.01$. 
Therefore, with probability more than $0.99$, at least 500,000 examples at times steps $t\in J'$ are misclassified, and hence proves the lemma.  

\section{Experiment Details}
\label{sec:experimentdetails}

\subsection{Data Preparation}
\label{sec:datapreparation}

\mypara{UCI Breast Cancer.} %
Initialization, training and test data sets have size $100, 400, 100$ respectively, and the data set is divided in random. 

\mypara{IMDB Review} We train a Doc2Vec feature extractor on the 50000 unlabeled reviews provided in the original data set using the method in~\cite{gensim}. Then we convert reviews in the original training and test set into feature vector of $d=100$ using the extractor. 
Initialization, training and test data sets have size $5000, 10000, 5000$ respectively. The first two data sets are drawn from the original training set in vector representation without replacement. The test set is drawn from the original test set.

\mypara{MNIST 1v7.} We use a subset of the MNIST handwritten digit data set containing images of Digit 1 and 7. 
Initialization, training and test data set have size $1000, 8000, 1000$ respectively. 
The first two data sets are drawn from the original MNIST training sets in random with no replacement. The test set is drawn from the original MNIST test set.

\mypara{fashionMNIST Bag v.s. Sandal.} We use a subset of the fashionMNIST data set containing images of bags and sandal. 
Initialization, training, and test data set have size $1000, 8000, 1000$ respectively. 
The first two data sets are drawn from the original fashionMNIST training sets in random with no replacement. The test set is drawn from the original fashionMNIST test set.

Each data set is normalized by subtracting the mean of all data points and then scaled into $[-1,1]^d$.

\subsection{Detailed Experiment Procedures.}
\label{sec:detailedprocedures}
The learning rate $\eta$ is set to $0.05$ for UCI Breast Cancer and $0.01$ for the other three data sets. The model obtained after making one pass of the training set using OGD algorithm has high accuracy on all data sets.

For Concentrated attack, the attacker needs to divide its attack budget, i.e. number of poisoning points, to points with $+1$ and $-1$ labels. We try three different splits -- all $+1$ points, half-half, all $-1$ points -- and report the best in terms of poisoning effect.

As mentioned in Section~\ref{sec:exp}, Concentrated attack also needs to adapt to the online setting by imposing an order to set of poisoning examples generated for the offline setting. In positive first order, the attacker appends all the points with $+1$ label to the data stream before appending points with $-1$ labels. In negative first order, the attacker appends all the points with $-1$ label to the data stream first instead. In random order, the points are shuffled and appended to the data stream. We try 100 different random orders, and report the best among the positive-first, negative-first and random orders in terms of poisoning effect. 

For poisoning the initialization set used by Slab defense, the attacker also needs to divide its attack budget to points with $+1$ and $-1$ labels. We divide the budget proportional to the fraction of $+1$ and $-1$ points in the clean initialization set.

\subsection{Additional Experiment Results and Plots}
\label{sec:additionalplots}
In this section, we present additional experiment results and plots mentioned in the main body of paper.

\subsubsection{The Semi-Online Setting}
\mypara{Error Rate of the Final Model on Clean Test Examples}
In Section~\ref{sec:exp}, we present the cosine similarity between the final model $\theta$ and the target model $\theta^*$ against defense parameter $\tau$.
Since the target $\theta^*$ has high error rate on test examples, we also plot the test error rate of $\theta$ on clean examples in Figure~\ref{fig:semionlineacc}.
The trend matches the cosine similarity reported in Section~\ref{sec:exp} -- the final model has low error rate when in hard regime, high error rate for large $\tau$ in easy regime, and intermediate error rate for small $\tau$ in easy regime.

\mypara{Cosine Similarity and Test Error Rate for Large Learning Rate.}
In Section~\ref{sec:exp}, we claim that the attacker cannot make $\cos(\theta, \theta^*)$ close to $1$ for small $\tau$ in easy regime because smaller $\tau$ increases the multiplicative factor $C$ in the theory. To validate our claim, we use a larger learning rate $\eta=1$ for all tasks, which reduces the factor $C$. Figure~\ref{fig:semionlinefast} and~\ref{fig:semionlinefastacc} show the cosine similarity to target model and the test error rate against defense parameter $\tau$ respectively. The result matches our expectation as for large $\eta$, a semi-online attacker can make $\theta$ close to $\thetastar$ even for small $\tau$ in easy regime. On the other hand, it still cannot make $\theta$ in the same direction as $\thetastar$ when in hard regime, and the test error rate remains low in hard regime.

\begin{figure*}
\centering
\includegraphics[width=0.9\textwidth]{./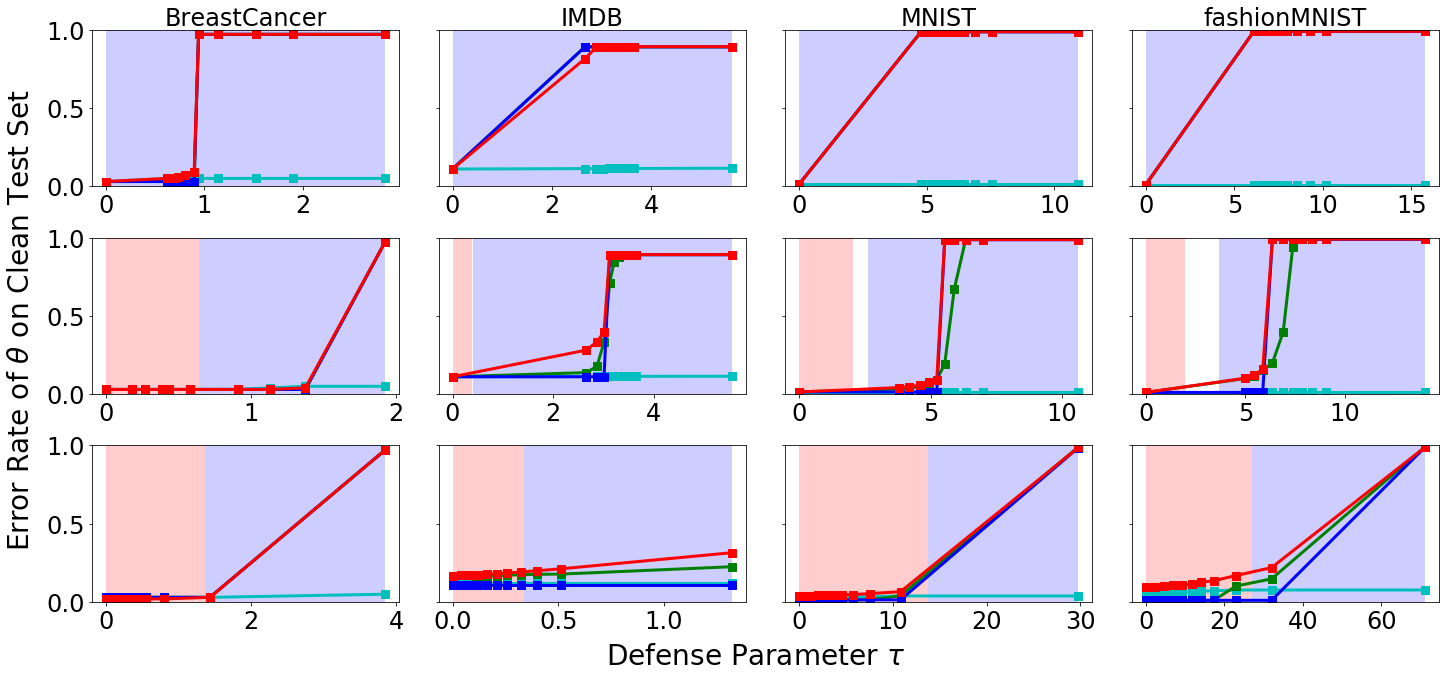}
\includegraphics[height=2.5in]{./figures/semi-legend.png}
\caption{The plot of $\theta$'s error rate on clean test examples against defense parameter $\tau$ for the semi-online attacks.
Small $\tau$ means stronger defense.
Larger error rate means more successful attack.
Red background region indicates hard regime, while blue background indicates easy regime.
{\bf Top to bottom}: $L_2$-norm, $L_2$-distance-to-centroid and Slab defense.
{\bf Left to Right}: BreastCancer, IMDB, MNIST, fashionMNIST.
}
\label{fig:semionlineacc}
\end{figure*}

\begin{figure*}
\centering
\includegraphics[width=0.9\textwidth]{./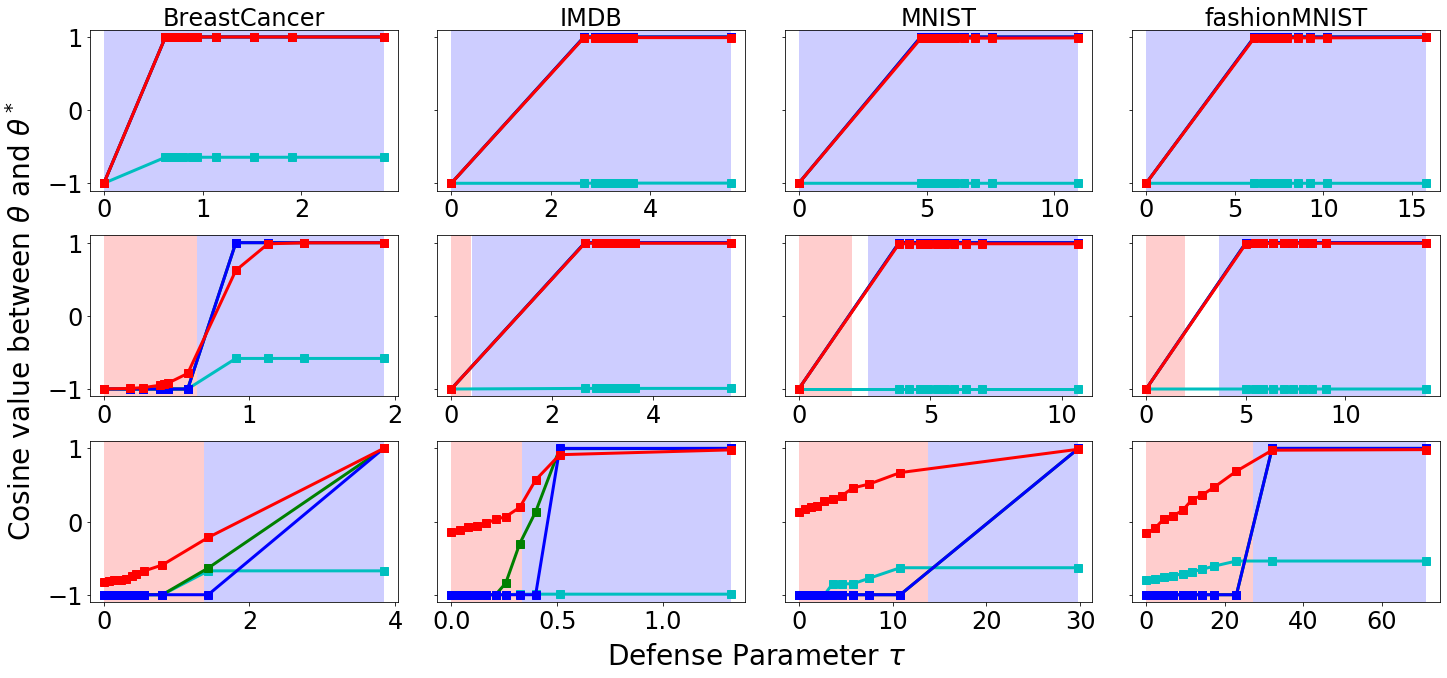}
\includegraphics[height=2.5in]{./figures/semi-legend.png}
\caption{The plot of $\cos(\theta, \theta^*)$ against defense parameter $\tau$ for the semi-online attacks with larger learning rate ($\eta=1$).
Small $\tau$ means stronger defense.
Larger $\cos(\theta, \thetastar)$ means more successful attack.
Red background region indicates hard regime, while blue background indicates easy regime.
{\bf Top to bottom}: $L_2$-norm, $L_2$-distance-to-centroid and Slab defense.
{\bf Left to Right}: BreastCancer, IMDB, MNIST, fashionMNIST.
}
\label{fig:semionlinefast}
\end{figure*}

\begin{figure*}
\centering
\includegraphics[width=0.9\textwidth]{./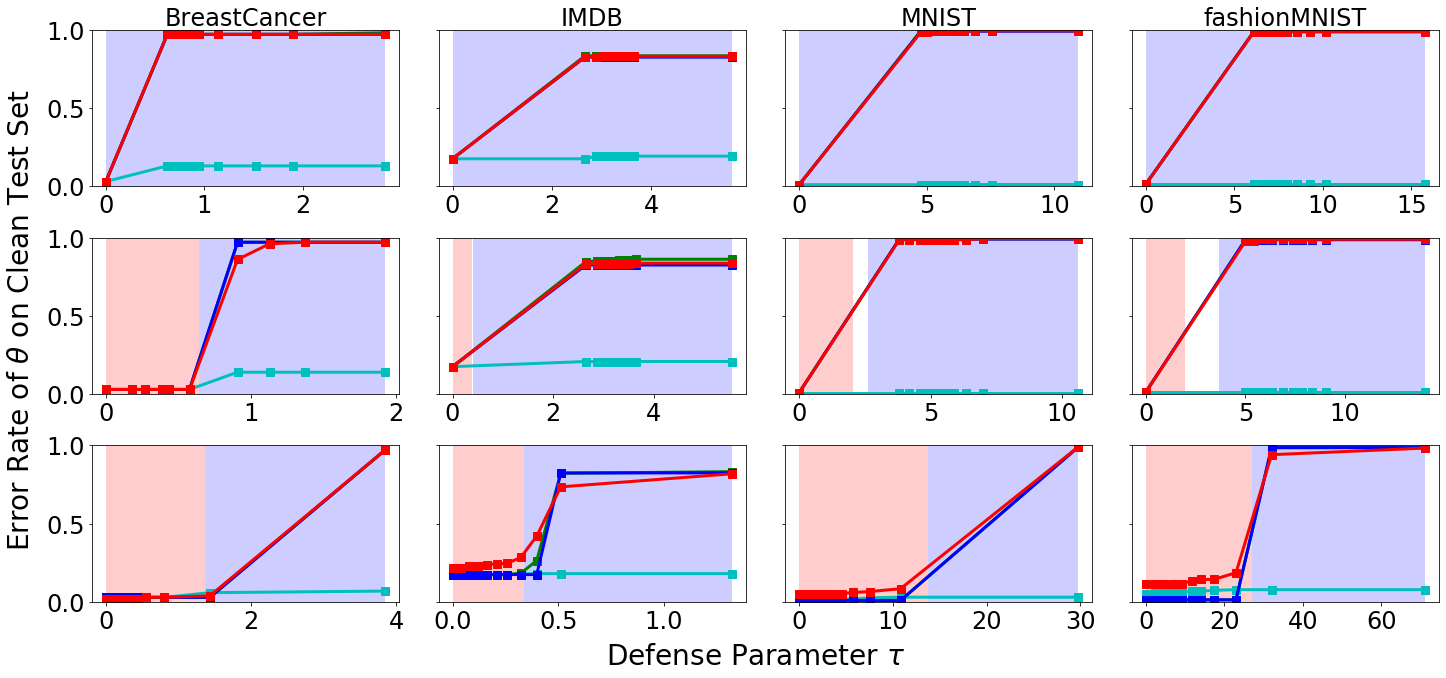}
\includegraphics[height=2.5in]{./figures/semi-legend.png}
\caption{The plot of $\theta$'s error rate on clean test examples against defense parameter $\tau$ for the semi-online attacks with larger learning rate ($\eta=1$).
Small $\tau$ means stronger defense.
Larger error rate means more successful attack.
Red background region indicates hard regime, while blue background indicates easy regime.
{\bf Top to bottom}: $L_2$-norm, $L_2$-distance-to-centroid and Slab defense.
{\bf Left to Right}: BreastCancer, IMDB, MNIST, fashionMNIST.
}
\label{fig:semionlinefastacc}
\end{figure*}

\begin{figure*}
\centering
\includegraphics[width=0.9\textwidth]{./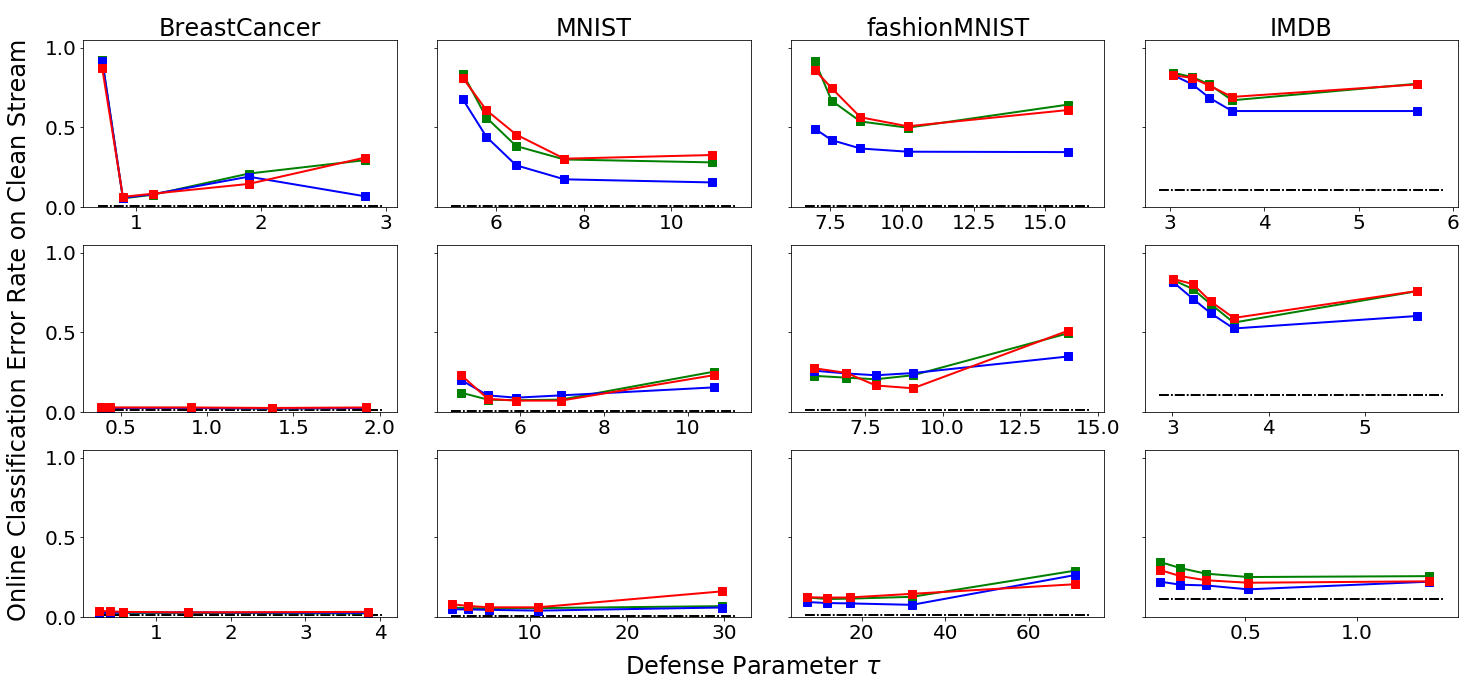}
\includegraphics[height=2.5in]{./figures/fully-legend.png}
\caption{Online classification error rate against defense parameter $\tau$ when 20\% of the examples in the data stream comes from the attackers. 
The dashdot line shows the error rate of the offline optimal classifier.
Small $\tau$ means the defense filters out more examples, both clean and poisoned.
Larger online classification error means more successful attack.
{\bf Top to bottom}: $L_2$-norm, $L_2$-distance-to-centroid and Slab defense.
{\bf Left to Right}: BreastCancer, IMDB, MNIST, fashionMNIST.
The data sets from left to right are in decreasing order of difficulty for successful poisoning attacks.
}
\label{fig:fullyonlineheavy}
\end{figure*}

\subsubsection{The Fully-Online Setting}
In Section~\ref{sec:exp}, we consider a mild fully-online attacker which poisons $10\%$ of the data stream.
We also try a more powerful fully-online attack that can poison $20\%$ of the data stream.
Figure~\ref{fig:fullyonlineheavy} shows the online classification error rate in the presence of this more powerful attacker.
The error rate is high for the weak $L_2$ norm defense. Slab is still able to keep the error rate low and is thus the strongest defense among all three.
The $L_2$-distance-to-centroid defense's error rate is in between, and can typically keep the error rate below the fraction of poisoning examples in the stream.
Meanwhile, the difficulty levels of poisoning attack over the four data sets are in the same order, with BreastCancer be the hardest and IMDB the easiest.

\end{document}